\newtheorem{theorem}{Theorem}
\newcommand{\dspnembed}{H_{\textnormal{embed}}}
\newcommand{\dspnenc}{H_{\textnormal{agg}}}
\newcommand{\xhdr}[1]{\vspace{0.0mm}\noindent{\textbf{#1.}}\hspace{0.5mm}}
\definecolor{mylinkcolor}{RGB}{0,0,140}
\renewcommand{\citet}{\cite}
\definecolor{Blues-4-1}{RGB}{239,243,255}
\definecolor{Blues-4-2}{RGB}{189,215,231}
\definecolor{Blues-4-3}{RGB}{107,174,214}
\definecolor{Blues-4-4}{RGB}{33,113,181}
\definecolor{Oranges-4-1}{RGB}{254,237,222}
\definecolor{Oranges-4-2}{RGB}{253,190,133}
\definecolor{Oranges-4-3}{RGB}{253,141,60}
\definecolor{Oranges-4-4}{RGB}{217,71,1}
\definecolor{Purples-4-1}{RGB}{242,240,247}
\definecolor{Purples-4-2}{RGB}{203,201,226}
\definecolor{Purples-4-3}{RGB}{158,154,200}
\definecolor{Purples-4-4}{RGB}{106,81,163}
\pgfmathsetmacro{\cubex}{0.4}
\pgfmathsetmacro{\cubey}{0.4}
\pgfmathsetmacro{\cubez}{0.4}
\tikzset{
    box/.style={rectangle,draw=black,thick, minimum size=5mm},
    sedge/.style={->, >=stealth},
    cube/.pic = {
    \draw[black, fill=Blues-4-2] (0,0,0) -- ++(-\cubex,0,0) -- ++(0,-\cubey,0) -- ++(\cubex,0,0) -- cycle;
    \draw[black, fill=Blues-4-2] (0,0,0) -- ++(0,0,-\cubez) -- ++(0,-\cubey,0) -- ++(0,0,\cubez) -- cycle;
    \draw[black, fill=Blues-4-2] (0,0,0) -- ++(-\cubex,0,0) -- ++(0,0,-\cubez) -- ++(\cubex,0,0) -- cycle;
    },
    overview/.pic = {
        \node (circ) [scale=1.25, cylinder, shape border rotate=90, draw=black, fill=Oranges-4-1, minimum width=4mm, minimum height=2mm] at (0, 0) {};
        \pic (square) at (1, 0.25) {cube};
        \node (t1) [inner sep=0] at (0.5, 1.2) {Underlying entities};
        \node (t2) at (0.5, 0.8) {in environment};
        \begin{scope}[on background layer]
            \node (env) [inner sep=0.7em, fill=black!3, rounded corners=1mm, fit=(circ) (square) (t1) (t2)] {};
        \end{scope}
        
        \node (img text) at (4.5, 1.2) {Image};
        \node (img) [draw=black, fill=white, scale=5.5] at (4.5, 0.3) {};
        \node (circ) [scale=1, cylinder, shape border rotate=90, draw=black, fill=Oranges-4-1, minimum width=4mm, minimum height=2mm] at (4.2, 0.2) {};
        \pic (square) [scale=0.5] at (4.9, 0.1) {cube};
        
        \node (lb) at (7, 0.3) {\Bigg\{};
        \node (rb) at (8.5, 0.3) {\Bigg\}};
        
        \node [box, scale=0.5, fill=Blues-4-4] at (7.5, 0.25 + 0.3) {}; 
        \node [box, scale=0.5, fill=Blues-4-3] at (7.5, 0 + 0.3) {};
        \node [box, scale=0.5, fill=Purples-4-2] at (7.5, -0.25 + 0.3) {};
        \node [box, scale=0.5, fill=Purples-4-2] at (8, 0.25 + 0.3) {};
        \node [box, scale=0.5, fill=Oranges-4-3] at (8, 0 + 0.3) {};
        \node [box, scale=0.5, fill=Purples-4-3] at (8, -0.25 + 0.3) {};
        
        \node (output) at (10.5, 0.3) {\large Output};
        
        \node at (0.5, -0.7) {unobserved};
        \node at (4.5, -0.7) {observed};
        \node at (7.75, -0.7) {unobserved};
        \node at (10.5, -0.7) {observed};
        
        \begin{scope}[on background layer]
            \node (model) [inner sep=0.3em, fill=blue!3, rounded corners=1mm, fit=(img) (img text) (output)] {};
        \end{scope}
        
        \draw [sedge, shorten >=3mm, shorten <=3mm] (env.east |- img) -- (img);
        \draw [sedge, shorten >=1.5mm, shorten <=2mm] (img) -- (lb) node [midway, above] {\textbf{SRN}};
        \draw [sedge, shorten >=2mm, shorten <=1.5mm] (rb) -- (output);
    },
}
\title{Better Set Representations For Relational Reasoning}
\newcommand{\printfnsymbol}[1]{%
  \textsuperscript{\@fnsymbol{#1}}%
}
\author{%
  Qian Huang \thanks{equal contribution} \\
  Cornell University \\
  \texttt{qh53@cornell.edu} \\
   \And
  Horace He \printfnsymbol{1} \\
  Cornell University \\
  \texttt{hh498@cornell.edu} \\
   \And
  Abhay Singh \\
  Cornell University \\
  \texttt{as2626@cornell.edu} \\
   \AND
   Yan Zhang \\
   University of Southampton \\
   \texttt{yz5n12@ecs.soton.ac.uk} \\
   \And
  Ser-Nam Lim \\
  Facebook AI \\
   \texttt{sernam@gmail.com} \\
   \And
  Austin Benson \\
  Cornell University \\
   \texttt{arb@cs.cornell.edu} \\
}
\begin{document}
\maketitle

\begin{abstract}
Incorporating relational reasoning into neural networks has greatly expanded their capabilities and scope.
One defining trait of relational reasoning is that it operates on a set of entities, as opposed to standard vector representations. 
Existing end-to-end approaches typically extract entities from inputs by directly interpreting the latent feature representations as a set.
We show that these approaches do not respect set permutational invariance and thus have fundamental representational limitations.
To resolve this limitation, we propose a simple and general network module called a Set Refiner Network (SRN).
We first use synthetic image experiments to demonstrate how our approach effectively
decomposes objects without explicit supervision.
Then, we insert our module into existing relational reasoning models and show that respecting set invariance leads to substantial gains in prediction performance and robustness on several relational reasoning tasks.
%
\end{abstract}



\section{Introduction} \label{sec:intro}
Modern deep learning models perform many tasks well, from speech recognition to object detection.
However, despite their success, a criticism of deep learning is its
limitation to low-level tasks as opposed to more sophisticated reasoning.  This
gap has drawn analogies to the difference in so-called
``System 1'' (i.e., low-level perception and intuitive knowledge)
and ``System 2'' (i.e., reasoning, planning, and imagination) thinking from cognitive psychology~\cite{Bengio2019talk,kahneman2011thinking}.
Proposals for moving towards System 2 reasoning in learning systems
involve creating new abilities for composition, combinatorial generalization, and disentanglement
\cite{Battaglia2018RelationalIB,marra2019integrating,Steenkiste2019AreDR}.

One approach for augmenting neural networks with these capabilities is
performing relational reasoning over structured representations, such as sets or
graphs. This approach is effective in computer vision for 
tasks such as visual
question answering, image captioning, and video
understanding~\cite{Hussein2019VideoGraphRM,Santoro2017ASN,Yang_2019_CVPR}.  For
relational reasoning, these systems are commonly split into two stages: 1)
a perceptual stage that extracts structured sets of vector representations,
intended to correspond to entities from the raw data, and 2) a reasoning stage
that uses these representations. 
As the underlying data is unstructured (e.g.,
images or text), designing end-to-end models that 
generate set representations is challenging. Typical differentiable methods
directly map the input to latent features using a feedforward neural network and
 partition the latent features into a set representation for
downstream reasoning~\cite{Baradel2020COPHYCL,Kipf2020ContrastiveLO,Santoro2017ASN,Zambaldi2018RelationalDR}.

However, this class of existing methods has a fundamental flaw that prevents them from extracting certain desired sets of entities---the responsibility problem~\cite{Zhang2019FSPoolLS}. 
At a high level,
if there exists a continuous map that generates inputs from entities, then any function that can map such inputs to a list representation of the entities must be discontinuous (under a few assumptions that we formalize in Section~\ref{sec:methods}).
For relational reasoning tasks, this implies the perceptual stage must contain discontinuous jumps.
Existing methods use a feedforward neural network to approximate such a discontinuous map, so entities from certain inputs cannot be represented faithfully as shown in Fig.~\ref{fig:responsibility}. We demonstrate this problem extensively in
Sections~\ref{sec:circle_reconstruct} and \ref{sec:sort_of_clevr}.

\begin{figure}[t]
    \centering
    \includegraphics[width=\linewidth]{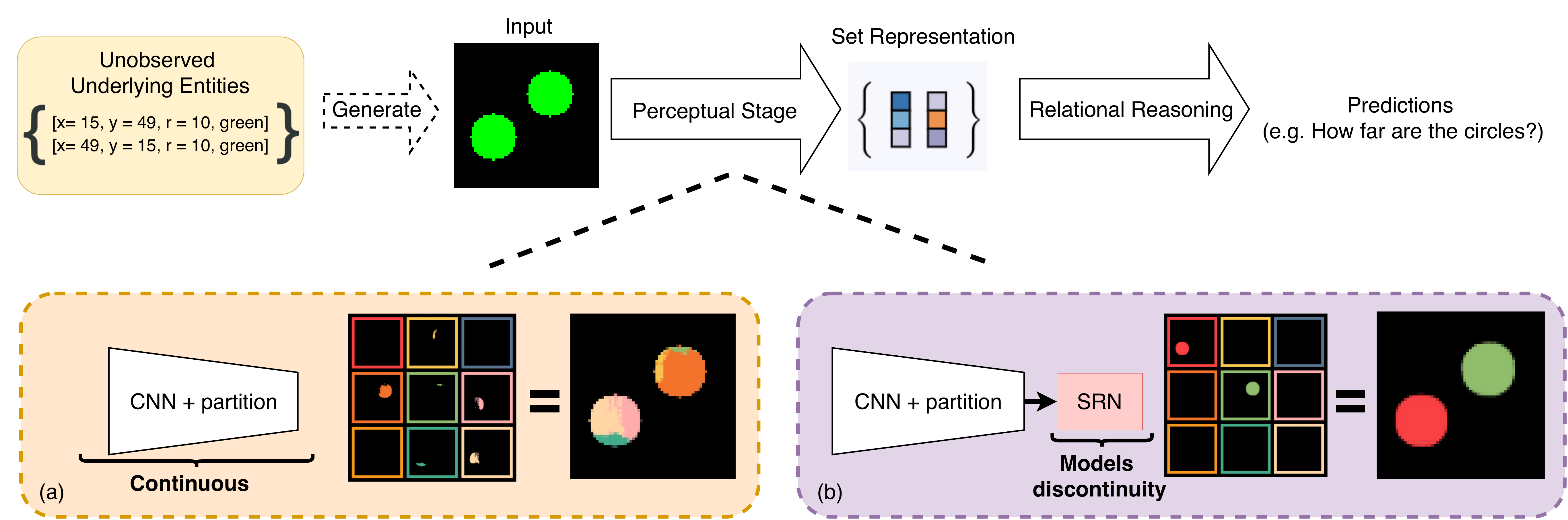}
    \caption{Overview of SRN and responsibility problem. The top row shows an overview of the relational reasoning paradigm. Under the assumptions of Theorem~\ref{theorem}, a perceptual stage that effectively recovers underlying set structure must be discontinuous. The second row shows a visualization of the perceptual stage for (a) existing methods and (b) SRN on a reconstruction task. Each of the 9 color coded panels corresponds to one of the 9 set elements. As existing methods (a) can only represent continuous functions, it is not able to recover the underlying object structure of the image. However, SRN (b) allows us to model discontinuities and thus can recover the underlying object structure.}
    \label{fig:responsibility}
\end{figure}

Here, we introduce the \emph{Set Refiner Network (SRN)}, a novel module that
iteratively refines a proposed set using a set encoder to manage
the aforementioned responsibility problem. 
The main idea of our module is that \emph{instead of directly mapping
an input embedding to a set, we instead find a set representation
that can be encoded to the input embedding}.  
This iterative procedure with simple gradient descent is a better model for the discontinuous jumps required to address the responsibility problem. 
As shown in Fig.~\ref{fig:responsibility}, the SRN can produce set
representations that are ``properly decomposed'', 
meaning that each set element corresponds to the 
entirety of one underlying entity or nothing at all.
In extensive experiments we demonstrate that the SRN
can effectively decompose entities in a controlled setting (Section~\ref{sec:circles}).


Furthermore, we hypothesize that proper decomposition obtained via SRNs can improve both accuracy and robustness of downstream reasoning modules,
analogous to prior work that suggests that disentanglement is desirable for downstream tasks~\cite{Steenkiste2019AreDR}. Intuitively, with a proper decomposition of entities a reasoning module only needs to learn relations between two set
elements to reason about two entities. On the other hand, if an entity is split
amongst multiple set elements, the relational reasoning module needs to learn
more complex relationships to perform the same task.

Indeed, we find that incorporating our SRN into relational
reasoning pipelines in vision, reinforcement learning, and natural language
processing tasks improves predictions substantially. 
On the Sort-of-CLEVR~\cite{Santoro2017ASN} benchmark, simply adding the SRN
to a Relational Network reduces the relative error by over 50\% on the most challenging question category.
We also show that the SRN can improve the robustness of these relational reasoning
modules. For example, even though a Relational Network has high test accuracy on
easy questions within Sort-of-CLEVR, we show that the learned model is
startlingly brittle, producing incorrect answers when the input is perturbed in
a way that should not affect the answer. Again, simply plugging our SRN into the Relational Network resolves this issue, increasing robustness significantly.

\subsection{Related Work}

\xhdr{Iterative Inference}
Iterative inference is the general idea of optimizing a learning procedure to recover latent variables instead of directly mapping from the input to the latent variables in one step~\cite{Marino2018IterativeAI}.
This idea has been used to predict sets in unsupervised object
detection~\cite{Greff2019MultiObjectRL}, general set prediction
\cite{Zhang2019DeepSP}, and energy-based generative
models~\cite{Mordatch2018ConceptLW}. Indeed, previous work has also noted that mapping from a set to a list requires iterative inference to avoid the responsibility problem~\cite{Zhang2019DeepSP,Zhang2019FSPoolLS}. 
However, we are the first to demonstrate that 
1) using iterative inference resolves a fundamental issue in
relational reasoning over raw data, and 
2) the ability for such an approach to
learn objects \emph{without direct supervision or reconstruction as an objective}. This second point prevents previous approaches from easily integrating into existing relational reasoning models. For instance, deep set prediction networks \cite{Zhang2019DeepSP} require an additional loss term that uses the ground truth sets during training, which makes them unsuitable for our setting where ground truth sets are not available.

\xhdr{Pre-trained Modules for Perceptual Reasoning}
An alternative method for perceptual reasoning is to use a
pre-trained object detector~\cite{Yang_2019_CVPR,Yao2018ExploringVR,Yi2018NeuralSymbolicVD}.
An immediate disadvantage is that this requires a comprehensive object detector,
which might require extensive training data in a new application.
However, even with a detector, this approach is limited by a priori imposing what ``objects'' should be, which may be unsuitable for different reasoning tasks. For example, should a crowd of humans be considered as one object, multiple, or the background? This might depend on the task.
For these reasons, we only consider fully differentiable methods in this paper.

\xhdr{Unsupervised Object Detection}
Unsupervised object detection also focuses on object representations from unstructured data~\cite{Burgess2019MONetUS,Greff2019MultiObjectRL,Lin2020SPACEUO}.
These methods effectively decompose scenes without explicit supervision,
but they often require sophisticated architectures 
specific to object detection and segmentation that are difficult to reuse~\cite{Greff2019MultiObjectRL,Veerapaneni2019EntityAI}.
Similar to pre-trained object detectors, these approaches also manually impose a notion of what ``objects'' should be.
In contrast, we propose an isolated module which can easily be inserted into many existing architectures for diverse applications.

\section{Set Refinement Networks for Better Set Representations}
\label{sec:methods}
The general model for relational reasoning over set-structured
representations maps an unstructured data point $X$ (such as an
image) to a set of vectors, which acts as an intermediate set
representation of entities (e.g., objects). 
Then, a module (e.g., for relational reasoning) uses the set of vectors to make a prediction. We write this two-stage pipeline as follows:
\begin{align}
S = G(X) \;\Longrightarrow\; \hat{Y} = F(S). \label{eq:pipeline}
\end{align}
Here, $G$ is a set generator function that maps an input $X$ to a set of vectors $S$, and $F$ is a (usually permutation-invariant) function that maps the set $S$ to a prediction $\hat{Y}$. 
We will typically think of $F$ as a differentiable relational reasoning module.
As a concrete example, in Sort-of-CLEVR experiments in Section~\ref{sec:sort_of_clevr},
$X$ is an image with several entities (colored shapes), $G$ is based on a grid partition of a CNN's feature maps, and the function $F$ tries to answer questions of the form: what is the shape of the entity that is furthest away from the red circle? 

With images and reasoning tasks, it is essential that the
set $S$ is an effective representation for the reasoning module
to make good predictions.
Indeed, a proper disentanglement of the input into meaningful entities has
been found crucial for performance in a variety of downstream 
tasks~\cite{Steenkiste2019AreDR}.
However, as previously mentioned, existing models suffer from a 
``responsibility problem'',  so it is difficult to decompose the input $X$ into a meaningful
set of entities represented by $S$. 


We now formalize this responsibility problem.
Let $\mathcal{S}_{n}^{d}$
be the set of all sets consisting of $n$ $d$-dimensional vectors. Assume a function 
$g\colon \mathcal{S}_{n}^{d} \to \mathcal{X} \subseteq \mathbb{R}^{k}$
that generates inputs from latent sets exists, where 
$g$ is continuous with respect to
the symmetric Chamfer and Euclidean distances.
The following theorem formalizes the inherent discontinuity
in $G$:
\begin{theorem}
\label{theorem}
Let $n, d \geq 2$ and 
$h\colon \mathcal{X} \to \mathbb{R}^{n \times d}$.
If for every $V = \{v_1, \ldots, v_n\} \in \mathcal{S}_{n}^{d}$, 
there exists a permutation $\sigma$ such that
$h(g(\{v_1, \ldots, v_n\})) = 
[v_{\sigma(1)}, v_{\sigma(2)},...,v_{\sigma(n)}]$,
then $h$ is discontinuous.
\end{theorem}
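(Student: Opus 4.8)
The plan is to derive a contradiction from assuming $h$ is continuous by exhibiting a single closed loop in set-space along which the only continuous lift is forced to be inconsistent at its endpoints. The whole argument rests on the tension between $g$ being permutation-blind (it sees only the set) and $h$ being forced to output an ordered list that is always a reordering of the current elements.

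First I would construct an explicit loop. Fix $n-2$ of the elements at distinct constant positions, and let the remaining two rotate around a common center inside a $2$-dimensional subspace of $\mathbb{R}^d$ (this is where $d \ge 2$ is used) so that after parameter time $t\in[0,1]$ they have exchanged places: concretely $v_1(t)=c+r(\cos\pi t,\sin\pi t,0,\dots)$ and $v_2(t)=c-r(\cos\pi t,\sin\pi t,0,\dots)$, chosen so that all $n$ points stay pairwise distinct for every $t$. Writing $V(t)=\{v_1(t),\dots,v_n(t)\}$, this path is continuous in the symmetric Chamfer distance and satisfies $V(0)=V(1)$ as sets, even though the labelled list $L(t)=[v_1(t),\dots,v_n(t)]$ ends in the transposed configuration $L(1)=[v_2(0),v_1(0),v_3,\dots,v_n]\neq L(0)$.

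Next I would push this loop forward. Because $g$ is continuous and $V(0)=V(1)$, the curve $g(V(t))$ is a genuine loop in $\mathcal{X}$, so if $h$ were continuous the composite $\phi(t):=h(g(V(t)))$ would be a continuous path in $\mathbb{R}^{n\times d}$ with $\phi(0)=\phi(1)$. The hypothesis on $h$ says that at each $t$ the list $\phi(t)$ is a row-permutation $P_{\sigma(t)}L(t)$ of the current elements. The key step is to show $\sigma(t)$ is constant: since the $n$ points stay uniformly separated along the compact path, the $n!$ candidate reorderings of $L(t)$ are uniformly separated in $\mathbb{R}^{n\times d}$, so for each fixed $\tau$ the set $\{t:\sigma(t)=\tau\}=\{t:\phi(t)=P_\tau L(t)\}$ is closed; these finitely many sets partition the connected interval $[0,1]$ into clopen pieces, forcing a single $\tau$ throughout.

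Finally I would read off the contradiction: with $\sigma\equiv\tau$ we get $\phi(0)=P_\tau L(0)$ and $\phi(1)=P_\tau L(1)$, but $\phi(0)=\phi(1)$ together with invertibility of $P_\tau$ gives $L(0)=L(1)$, contradicting the transposition built into the loop. I expect the main obstacle to be the constancy-of-$\sigma$ step: one must verify that the moving pair never collides with each other or with the fixed points (otherwise two orderings could merge and $\phi$ could switch branches unnoticed), and that continuity in Chamfer distance genuinely transfers along the whole loop. The roles of $n\ge 2$ (two elements to swap) and $d\ge 2$ (room to rotate them without collision) enter exactly here.
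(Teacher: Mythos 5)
Your proposal is correct and follows essentially the same route as the paper: the paper reduces the claim to \cite[Theorem 1]{Zhang2019FSPoolLS} applied to the composite $h \circ g$, and your swap-loop with the clopen/monodromy argument forcing $\sigma(t)$ to be constant is precisely the content of that cited theorem, which you have simply inlined rather than invoked. The only point worth stating explicitly is that $g(V(0))=g(V(1))$ because $g$ is a set function, which you use implicitly when declaring $\phi(0)=\phi(1)$.
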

In other words, if our data can truly be generated from underlying entities by a set function $g$, 
and $h$ perfectly captures the entities (in the sense that $h$ maps the input data to a list representation of the underlying entities), then $h$ must be discontinuous.
The proof follows from applying~\cite[Theorem 1]{Zhang2019FSPoolLS}
to $h \circ g$ (see the Appendix for details).
Partitioning an embedding of a feedforward network cannot
effectively model this discontinuity.




In the rest of this section, we develop a general technique, which we call a \emph{Set Refiner Network}, that can be ``tacked on'' to any set generator function $G$ in Eq.~\ref{eq:pipeline} to create better set representations through better modeling of the discontinuity.
We then show in Section~\ref{sec:experiments} how this improves performance in a variety of 
relational reasoning tasks that can be expressed via the pipeline in Eq.~\ref{eq:pipeline}.

\subsection{Methodology}
Existing methods of implementing $G$ by partitioning latent features suffer from the responsibility problem, so the generated set representation is not always properly decomposed. 
Our main idea is to take the output $S_0$ of the set
generator $G$ and ``refine'' $S_0$ to a better decomposed set $S$.
We do so by iteratively improving $S_0$ so that it \emph{encodes}
to the original input.

Given a data point $X$, let $S_0 = G(X)$.
We also generate an data embedding $z = \dspnembed(X)$ as a reference for what information should be contained in $S$. 
In practice, we implement $\dspnembed$ by sharing weights with $G$, e.g., by taking a linear combination of intermediate feature maps (Section~\ref{sec:sort_of_clevr}).

\begin{algorithm}[t]
\begin{algorithmic}[1]



\State $ S_0 = G(X)$  
\Comment{Encode input with the traditional perceptual stage $G$}

\State  $z = \dspnembed(X) $ 
\Comment{Begin SRN}
 \For{$i=1 ... r$}
    \State $L(S_i) = \| \dspnenc(S_{i-1}) - z \|_2^2$ 
    
    \State $S_i = S_{i - 1} - \alpha \frac{\partial L}{\partial S_{i - 1}} $ 
    \Comment{Gradient step}
 \EndFor
 \State $S = S_{r}$ 
 \Comment{}{End of SRN}
 
 \State $\hat Y = F(S)$
 \Comment{Final prediction}
 \end{algorithmic}
 \caption{One forward pass of the Relational Reasoning System with SRN \label{alg:SRN}}
\end{algorithm}

We then address the responsibility problem
with an iterative inference approach, in line with recent research~\cite{Greff2019MultiObjectRL,Veerapaneni2019EntityAI,Zhang2019DeepSP}.
In particular, given the data point embedding $z$, we seek to find a set $S$ \emph{that encodes to $z$}, i.e., we want to solve the following optimization problem
\begin{equation}
\textstyle S = \arg\min_{S'} L(S') = \arg\min_{S'} \| \dspnenc(S') - z \|_2^2 \label{eq:inner_loss},
\end{equation}
where $\dspnenc$ is a permutation-invariant set encoder function parameterized by neural networks.
Finding a set this way forces permutation invariance into the representation
directly, as $\dspnenc$ is a permutation-invariant set function, while also ensuring that the set captures information in the embedding $z$.

In order to train in an end-to-end fashion, we use what we call a \emph{Set Refiner Network (SRN)},
which is defined by the following inner optimization loop:
\begin{equation}
S  = \textnormal{GradientDescent}_{S'}(L(S'), S_0, r),  \label{eq:inner_opt}
\end{equation}
which means that we run gradient descent for the loss in Eq.~\ref{eq:inner_loss}
for $r$ steps, starting from the initial point $S_0$.
Collectively, this produces the set input $S$ for the
function $F$ making predictions for the downstream task (Alg.~\ref{alg:SRN}).
We can also view this entire procedure as a new perceptual stage $G'$. 

An alternative view of the SRN is as a generative model akin to representation inversion~\cite{Engstrom2019LearningPR} or energy-based models~\cite{Mordatch2018ConceptLW}. 
Given a label and an image classifier, for example, one can generate an image that matches a label through iterative inference. 
In our case, given an embedding $z$ along with a set encoder $\dspnenc$ that maps a set to an embedding, we generate a set that matches the embedding through iterative inference. 

The choice of the module $F$ in Eq.~\ref{eq:pipeline} is crucial for the SRN to be able to decompose the latent space into meaningful entities. In particular, $F$ should be a set function that ignores the order of objects,
which is the case in tasks such as relational reasoning.
As in previous work~\cite{Veerapaneni2019EntityAI, Kipf2020ContrastiveLO}, 
we expect that this symmetry assumption will force the network to learn a decomposition. Intuitively, pushing all information into one set element disadvantages the model, while all set elements need to contain similar ``kinds'' of information as they are processed identically. 
Despite this assumption, typical set generation process still fail to enforce permutation invariance, which is fixed by including our SRN.

\xhdr{Graph Refiner Network (GRN) extension}
We can extend the SRN technique to other types of structured
representations. 
Here, we extend it to graph representations.
In this case, the function $G$ in Eq.~\ref{eq:pipeline} produces a graph
instead of a set, and the downstream reasoning module $F$ operates on a graph.
The only change we need to make is to replace the set encoder $\dspnenc$ in SRN
with a graph encoder. Here, we use a Graph Neural Network~\cite{Battaglia2018RelationalIB} that creates a vector
representation of a graph. Furthermore, even if $G$ produces a set, we can
still interpret this to be a graph where the nodes correspond to the set
elements and the edges are initialized to some pre-configured initialization. We
consider the complete graph as an initialization, where the refinement step
seeks to find weights on the edges in the complete graph that are useful for a
downstream reasoning task. We call this approach a \emph{Graph Refiner
  Network (GRN)}.
However, we find that the GRN is difficult to train and usually does not lead to much improvement over SRN, and we only use the GRN in Section~\ref{sec:language}.

\subsection{Explanatory experiments}\label{sec:circles}
\label{sec:circle_reconstruct}

Before turning to more sophisticated learning pipelines in
Section~\ref{sec:experiments}, we first consider the simpler task of image
reconstruction to demonstrate the capabilities of our Set Refiner Networks.
Image reconstruction is a useful sandbox for understanding set
representations which will aid intuition for more complex reasoning tasks. The set representations should contain as much information about the original image as possible and the latent structure can be easily
visualized as a set of objects.

To this end, we construct a synthetic ``Circles Dataset'' for easy control over
the latent structure.  Each image is $64 \times 64$ pixels with RGB channels in
the range 0 to 1 (Fig.~\ref{fig:decomp_a} is an example data point).
An image contains 0 to 10 circles with varying color and size.  
Each circle is fully contained in the image with no overlap between circles of the same color.

\begin{figure}[t]
\RawFloats
\begin{minipage}{0.45\linewidth}
\centering
\subfigure[Input\label{fig:decomp_a}]{\includegraphics[width=0.32\linewidth]{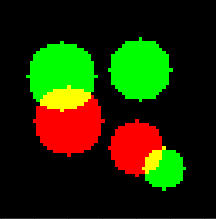}}
\subfigure[SRN\label{fig:decomp_c}]{\includegraphics[width=0.32\linewidth]{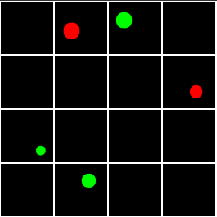}}
\subfigure[Baseline\label{fig:decomp_e}]{\includegraphics[width=0.32\linewidth]{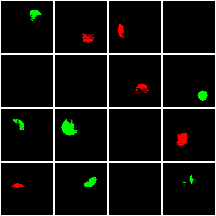}}
\captionof{figure}{\label{fig:decomp}Decomposition results of SRN (b) and
  Baseline (c) models over a sample input (a) in the image reconstruction task
  with the synthetic Circles Dataset. The SRN can successfully decompose
the image.}
\end{minipage}\hfill
\begin{minipage}{0.45\linewidth}
  \captionof{table}{Decomposition success rate on the circles dataset.
    Including the SRN drastically improves the decomposition of the
    images into its constituent entities.}
    \label{decomp_table}
\begin{tabular}{ccc}
\toprule
  & \multicolumn{2}{c}{\% of Images Decomposed} \\ 
 \cmidrule(r){2-3} 
\# of Circles & Baseline & SRN\\
\midrule
1 & 45.1\% & \textbf{95.8}\% \\
2 & 16.5\% & \textbf{88.7}\% \\
3 & 7.8\% & \textbf{79.9}\% \\
4 & 3.2\% & \textbf{72.7}\% \\
5 & 1.9\% & \textbf{65.2}\% \\
\bottomrule
\end{tabular}
\end{minipage}
\end{figure}


We compare two models that implement the basic pipeline in Eq.~\ref{eq:pipeline}.
The models use the same set function $F$ but different set generators $G$.
First, the \emph{Baseline model} implements $G$ using a CNN with groups of channels of the final feature map interpreted as set elements;
this follows the approach of 
differentiable object extractors~\cite{Baradel2020COPHYCL,Kipf2020ContrastiveLO}. 
Second, the \emph{SRN model} extends baseline $G$ by adding the SRN as in Alg.~\ref{alg:SRN}. Specifically, $\dspnembed$ is implemented by flattening the final feature map and passing it through a linear layer. $\dspnenc$ is a set encoder that processes each element individually with a 3-layer
MLP, followed by FSPool~\cite{Zhang2019FSPoolLS}. 
The set function $F$ decodes each element to an image $I_i$ independently through shared transposed-convolution layers,
finally summing the generated images, weighted by their softmax score to ensure their sum lies in $[0, 1]$:
\begin{align}
\textstyle
    I_i = \text{TransposeConvs}(S_i),\;  \quad \hat{Y} = F(S) := \sum_{i = 1}^{\lvert S \rvert}  \text{softmax}_i(I) \cdot \text{sigmoid}(I_i).
\end{align}
We train the model with squared error image reconstruction loss using
the Adam optimizer with learning rate 3e-4. See Appendix B for the full architecture details.

Figure~\ref{fig:decomp} shows an example reconstruction and the images
decoded from the set elements. Although we only provide supervision on
the entire image's reconstruction, SRN naturally decomposes most images into a set of the individual circles. In contrast, the baseline approach does not
decompose the circles properly (Fig.~\ref{fig:decomp_e}).
We also quantitatively measure the quality of the set representations in terms of how well they decompose the circles into separate set elements. We say that a decomposition of an image is \emph{successful} 
if the number of set elements containing objects is equal to the number of circles in the image, where a set
element is considered to contain an object if the decoded image has an $L_{\inf}$ norm greater than 0.1. Using this metric, we find that the SRN is far
more successful at decomposition than the baseline (Table \ref{decomp_table}),
especially when there are multiple circles. We repeat the same experiments with the CLEVR dataset \cite{Johnson2016CLEVRAD} and observe similar results (see Appendix C).

Interpolating circles from one position to another illustrates why we see this difference in decomposition. Both
models enforce symmetric processing of set elements, which causes both models to
exhibit some form of decomposition. However, as the baseline is unable to
overcome the responsibility problem, it cannot use the natural decomposition of the image into objects and instead must decompose the image by location. As we move the circles from left to right, the baseline
gradually hands responsibility for the circle from one set element to the
other. See Appendix B for a visualization
of this. This causes failures of proper decomposition \emph{in between} the two locations. 
On the other hand, the SRN can discontinuously shift
responsibility from one set element to another, thus
properly decomposing the image throughout the interpolation. 

\begin{figure}[t]
\centering
\captionsetup[subfigure]{aboveskip=-2pt,belowskip=-5pt}
\subfigure[Baseline X coordinate]{\includegraphics[width=0.24\linewidth]{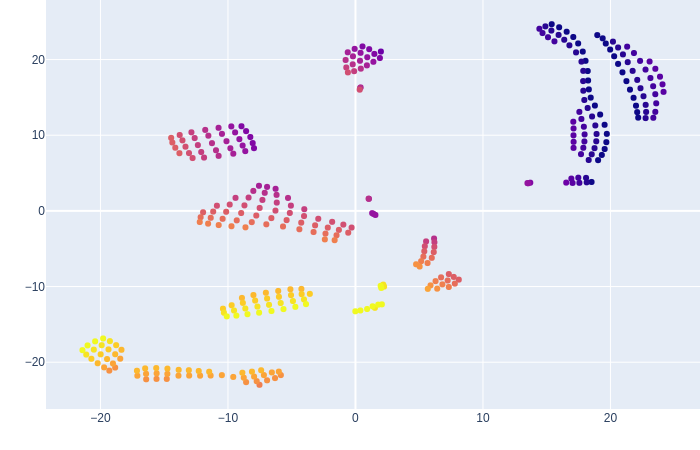}}
\subfigure[Baseline Y coordinate]{\includegraphics[width=0.24\linewidth]{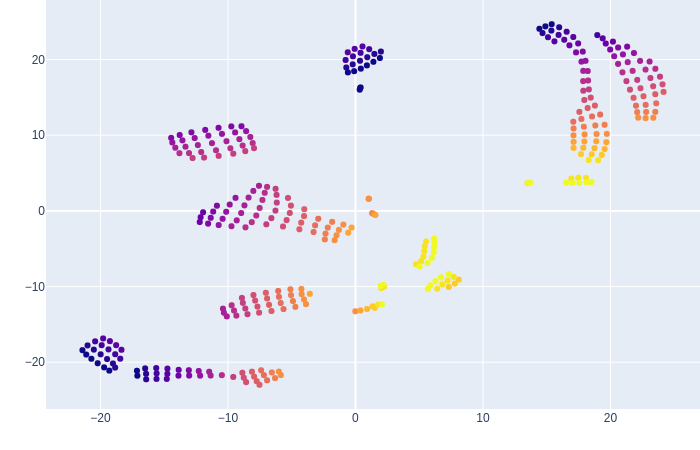}}
\subfigure[SRN X coordinate]{\includegraphics[width=0.24\linewidth]{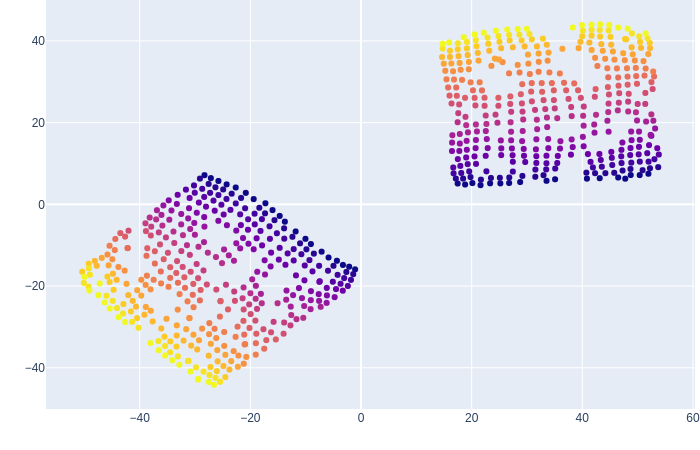}}
\subfigure[SRN Y coordinate]{\includegraphics[width=0.24\linewidth]{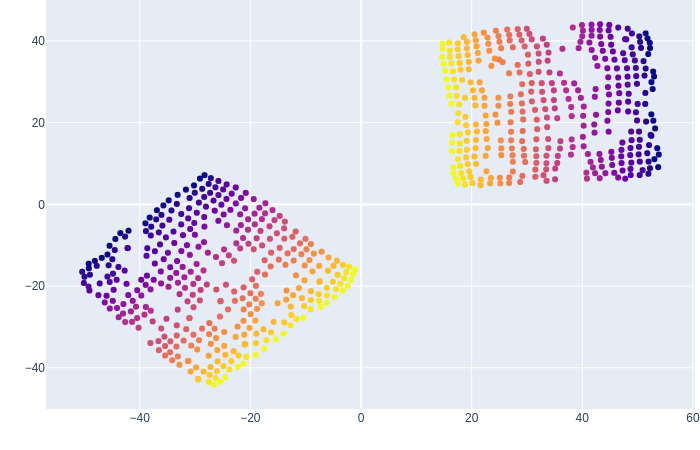}}
\vspace{-\baselineskip}
\caption{t-SNE plots of latent set elements. 
Points are colored by X and Y coordinate values in (a), (c) and (b), (d).
The SRN representations smoothly vary with coordinates and has two distinct clusters corresponding to two colors.}
\label{fig:2dtsne}
\end{figure}

Finally, we visualize how the SRN produces meaningful structure in the latent space.
We grid-sample images with one red or green circle at different
locations and plot the latent space of the set elements corresponding to the
circle using a two-dimensional t-SNE embedding with default scikit-learn
settings (Fig.~\ref{fig:2dtsne}). 
The set elements generated by the
baseline are in discontinuous clusters, while the SRN shows the desired grid
structure with two clear factors of variation --- the X and Y coordinates of
the circles, with two clusters for the two colors. We also visualize coordinates, radius, and color with a
three-dimensional t-SNE embedding in Appendix B, with similar results.
Again, this discrepancy is due to the responsibility problem,
and the smooth latent space of SRN implies that the model
has properly decomposed objects from multi-object images.

\section{Relational Reasoning Experiments}\label{sec:experiments}
In this section, we plug our SRN module into three diverse reasoning
tasks: reasoning about images, reasoning within reinforcement learning, and
reasoning about natural language. 
In all cases, we see that including the SRN refinement step within an existing reasoning pipeline leads to a substantial
increase in performance. 
Moreover, we show how the SRN can make reasoning modules more robust.


\subsection{Relational Reasoning with Images from Sort-of-CLEVR}
\label{sec:sort_of_clevr}
So far, we have shown that the SRN can perform unsupervised object detection in a reconstruction setting, while the baseline cannot. 
We now demonstrate that this advantage can improve both performance and robustness on a relational reasoning task. We use the Sort-of-CLEVR task~\cite{Santoro2017ASN}, which is designed to test relational reasoning capabilities of neural networks. The dataset consists of images of 2D colored shapes along with questions and answers about the objects. The questions are split into non-relational questions and relational questions, and the relational questions are further split into questions of the form ``What is the shape of the object that is the furthest from the red object'', ``What is the shape of the object that is the closest to the grey object'', and ``How many objects are the same shape as the green object''? We call these 3 question categories \emph{Furthest}, \emph{Closest}, and \emph{Count}. We use 100000 images for training, 1000 for validation, and 1000 for test. More details are in Appendix D.

\xhdr{Model} 
We use the same RN architecture as Santoro et al.~\citet{Santoro2017ASN} and insert our SRN between the perceptual end (i.e, the convolutional feature extractor) and the Relational Network, similar to the reconstruction experiment in Sec.~\ref{sec:circles}. All other network and training parameters are kept constant. We train all models for 50 epochs and select the epoch with the best validation accuracy.

\xhdr{Results}
For all experiments, we run 5 trials and report the mean and standard deviation (Table~\ref{tab:sortofclevr}). 
On the most challenging question category (Furthest task),
we improve the performance from 91.4\% to 95.9\%, reducing the error by 52\%. We note that this is the category that requires the most global reasoning from the network. We also demonstrate slight improvements on the Closest task. Overall, we increase average performance from 95\% to 97\%. Non-relational question performance is not reported as both models achieve over 99\% performance.

In order to verify the importance of the inner optimization loop, we take an SRN model trained with 5 inner optimization steps and evaluate with 0 steps. The Count task performance stays at 100\%, the Closest task performance drops modestly from 95\% to 87\%, and the Furthest task drops substantially from 96\% to 63\% (random guessing is 50\%). This provides evidence that the inner optimization step is necessary for the furthest task, which is the one with the most global reasoning.

\xhdr{Robustness}
To verify our original hypothesis about the fundamental representation discontinuity with Relational Networks, we define two ``Robustness'' tasks. In the first one (Robustness Synthetic), we construct an ``easy'' input configuration on which both models provide the correct answer with very high confidence, and create our dataset by translating it to 200 different positions. Details about this configuration are in Appendix D. We test on the ``Furthest'' task, which we believe requires the most global relational reasoning. For a given image, we define the model to be robust on that image if the model provides the correct answer on all of 720 evenly spaced rotations. We rotate the shape coordinates and not the image itself, as all shapes during training are axis-aligned.
We find that the baseline RN is robust on only 1.5\% on this synthetic robustness dataset, while
the model with SRN achieves 96.7\% robustness (Table~\ref{tab:sortofclevr}).

To demonstrate generalization in a setting closer to the training data, in our second robustness task (``Dataset'' task), we generate 1000 new ``easy'' images as a test set, where an image is
``easy'' if the furthest shape is at least twice as far as the second furthest shape. Both models achieve >99\% accuracy on this subset. However, the RN is robust on only 61\% of images, while the RN augmented with SRN is robust on 93\% of images.
Finally, in order to verify that robustness is not simply a result of better overall performance, we evaluate the SRN at an early stage of training on the robustness tasks (Epoch 6). At this point, the SRN's relational accuracy is lower than the RN accuracy at the end of training (93.7\% vs.\ 95.1\% respectively). Still, the SRN performs better on both the synthetic and dataset robustness tasks (55\% and 84\% respectively).

\xhdr{Visualization}
Unlike the image reconstruction task, there is no easy way to visualize the object representations. Thus, for both RN and RN+SRN, we freeze the perceptual stage (i.e., set generator) and 
replace the relational module with the image reconstruction module in Section~\ref{sec:circle_reconstruct}. We overfit on a dataset of 10000 images; this can be viewed as visualizing what meaningful information the latent set elements can contain. The number of trainable parameters is identical for both models.

The quality of the SRN reconstruction is far better than the RN reconstruction. After 50 epochs, the baseline RN has 5000 squared error reconstruction loss on the training set while SRN has 773. 
One reason for this is that the RN will often ``hallucinate'' objects that do not exist (Fig. \ref{fig:rel_reconstruct_samples}). 
For example, even though all input images have one blue shape, the RN will occasionally generate two or three blue shapes. This strongly suggests that the responsibility problem shows up---the baseline RN splits objects into multiple set elements. As each set element is decoded independently and identically, the image decoder is unable to decide which element is the actual circle, producing the hallucinations.

\begin{table}
\centering
\caption{Sort-of-CLEVR performance (mean \% and standard deviation over 5 runs).}\label{tab:sortofclevr}
\vspace{-\baselineskip}
\begin{tabular}{lccc}
\toprule
  & \multicolumn{3}{c}{Performance } \\ 
 \cmidrule(r){2-4}  
Task & RN & RN + SRN & \% Reduction \\
\midrule
Relational & 95.1 {\tiny $\pm 0.4$} & \textbf{96.9 {\tiny $\pm 0.3$}} & 37\\
\midrule
Furthest & 91.4 {\tiny $\pm 0.5$} & \textbf{95.9 {\tiny $\pm 0.4$}} & 52\\
Closest & 93.9 {\tiny $\pm 0.9$} & \textbf{94.8 {\tiny $\pm 0.5$}} & 15\\
Count & \textbf{100 {\tiny $\pm 0.0$}} & \textbf{100 {\tiny $\pm 0.0$}} & N/A\\
\midrule
Robustness (Synthetic) & 1.5 {\tiny $\pm 1.2$} & \textbf{96.7 {\tiny $\pm 2.3$}} & 97\\
Robustness (Dataset) & 61.3 {\tiny $\pm 6.2$} & \textbf{93.2 {\tiny $\pm 3.6$}} & 82\\
\bottomrule
\end{tabular}
\end{table}
\vspace{6mm}
\begin{figure}[t]
\centering
\subfigure[Input image]{\includegraphics[width=0.17\linewidth]{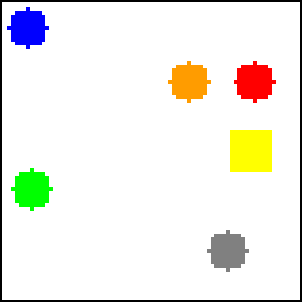}}
\hfill
\subfigure[RN recon.]{\includegraphics[width=0.17\linewidth]{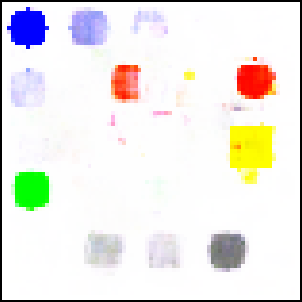}}
\hfill
\subfigure[RN decomp.]{\includegraphics[width=0.17\linewidth]{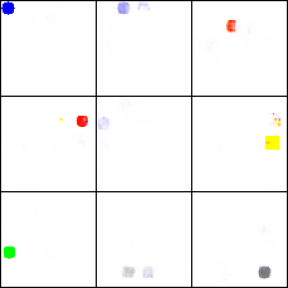}}
\hfill
\subfigure[SRN recon.]{\includegraphics[width=0.17\linewidth]{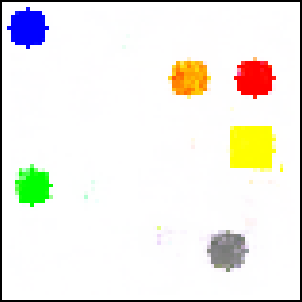}}
\hfill
\subfigure[SRN decomp.]{\includegraphics[width=0.17\linewidth]{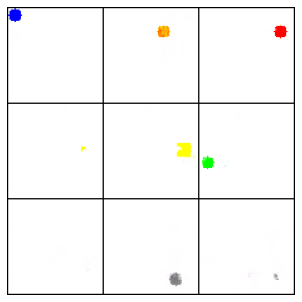}}
\vspace{-\baselineskip}
\caption{Sort-of-CLEVR reconstruction and decompositions (top 9/25 elements ranked by $\ell_1$ norm).\label{fig:rel_reconstruct_samples}
Not only is the baseline RN unable to reconstruct the image properly, it is also unable to decompose the objects, while the SRN properly decomposes the image and reconstructs the image.
}  
\end{figure}

\subsection{World Model Learning}

We demonstrate the advantage of our approach in a more complicated setting, \emph{world model learning}, where the goal is to learn and predict state transitions in a Reinforcement Learning environment.

\xhdr{Model} 
We plug our SRN module into C-SWM \cite{Kipf2020ContrastiveLO}, 
which uses relational reasoning as the core system to learn a world model. In C-SWM, the object representations are extracted by interpreting each channel of the feature maps as an object mask and then passing them through the object encoder individually. The resulting ``state vectors'' are intended to encode the abstract state of each object, which are then used in the Relational Transition Model. This perceptual stage design is similar to the baseline in our image reconstruction experiment in Sec.~\ref{sec:circles}. We plug in our SRN to refine the set of state vectors, with the original state vectors as the initial guess $S_0$ and a linear projection of object masks as the embedding $z$.
The number of set elements is set to be the same as the C-SWM hyperparameter $K$. 
We sweep learning rates for both models and trained for 500 epochs, which led to better baseline performance compared to the original implementation of C-SWM.
More training and architecture details are in Appendix E. 

\xhdr{Results} 
We see performance improvements by incorporating the SRN into C-SWM on 
the Atari Pong (Pong) and Space Invaders (Space) datasets (Table~\ref{tab:world}). These are the only 2 tasks for which C-SWM does not achieve 100\% performance.
As the original paper~\cite{Kipf2020ContrastiveLO} notes, one limitation of C-SWM is that the model cannot disambiguate multiple identical objects and that some kind of iterative inference procedure would be required  to resolve this limitation. Thus, it is perhaps unsurprising that including the SRN provides especially better improvements for the Space dataset, which contains several identical objects. 
In addition, using the SRN has especially notable improvements after 10 action steps. 


\begin{table}[t]
\caption{C-SWM hits at rank 1~\cite{Kipf2020ContrastiveLO} and mean reciprocal rank accuracy with and without SRN on the Atari Pong (Pong) and Space Invaders (Space) datasets (mean and standard deviation over 5 trials). The number in the parenthesis after model names is the number of object slots $K$.}\label{tab:world}
\vspace{-\baselineskip}
\begin{center}
\begin{tabular}{ll ccccccc}
\toprule
&  & \multicolumn{2}{c}{1 Step} & \multicolumn{2}{c}{5 Steps}&\multicolumn{2}{c}{10 Steps} \\ 
\cmidrule(lr){3-4} \cmidrule(lr){5-6} \cmidrule(lr){7-8}
   & Model &  H@1  & MRR &  H@1  & MRR &  H@1  & MRR \\
\midrule
\multirow{4}{*}{\rotatebox{90}{Pong}} 
&C-SWM(5) &$63.6$ {\tiny $\pm 1.6$} & $75.8$ {\tiny $\pm 1.2$} & $27.4$ {\tiny $\pm 1.2$} &  $44.5$ {\tiny $\pm 2.5$} & $17.2$ {\tiny $\pm 1.4$} & $32.2$ {\tiny $\pm 2.1$} \\ 
&SRN (5) &  $\textbf{64.0}$ {\tiny $\pm 1.2$} & $\textbf{76.0}$ {\tiny $\pm 0.8$} & $27.5$ {\tiny $\pm 2.0$} & $43.8$ {\tiny $\pm 2.0$} & $\textbf{18.9}$ {\tiny $\pm 3.0$} & $\textbf{33.4}$ {\tiny $\pm 3.5$} 
\\ 
&C-SWM (3) & $58.4$ {\tiny $\pm 3.1$} & $72.6$ {\tiny $\pm 1.9$} & $24.6$ {\tiny $\pm 2.7$}& $40.8$ {\tiny $\pm 2.6$} & $14.4$ {\tiny $\pm 1.7$}& $29.1$ {\tiny $\pm 1.9$} \\
&SRN (3) & $63.2 $ {\tiny $\pm 1.4$} & $75.4 $ {\tiny $\pm 0.5 $} & $\textbf{28.4 }$ {\tiny $\pm 3.1$} & $\textbf{45.1 }$ {\tiny $\pm 2.3$} & $16.1 $ {\tiny $\pm 2.9$} & $30.9 $ {\tiny $\pm 1.9$} \\
\midrule
\multirow{4}{*}{\rotatebox{90}{Space}}
&C-SWM(5) & $54.8 $ {\tiny $ \pm 2.6$} & $71.7 $ {\tiny $\pm 2.0$} & $46.7 $ {\tiny $\pm 2.9$} & $65.4 $ {\tiny $\pm 2.2$} & $27.1 $ {\tiny $\pm 1.7$} & $45.2 $ {\tiny $\pm 1.9$}\\ 
&SRN  (5) & $64.7 $ {\tiny $\pm 1.1$} & $79.0 $ {\tiny $\pm 0.7$} & $58.0 $ {\tiny $\pm 5.4$} & $74.0 $ {\tiny $\pm 4.2$} & $36.6 $ {\tiny $\pm 4.5$} & $56.6 $ {\tiny $\pm 4.3$}\\
&C-SWM(3) & $\textbf{69.4 }$ {\tiny $\pm 1.1$} & $\textbf{81.9 }$ {\tiny $\pm 0.5$} & $60.0 $ {\tiny $\pm 0.9$} & $75.5 $ {\tiny $\pm 0.6$} & $37.5 $ {\tiny $\pm 4.2$} & $56.7 $ {\tiny $\pm 5.1$} \\
&SRN  (3) & $69.0 $ {\tiny $\pm 1.2$} & $81.6 $ {\tiny $\pm 0.7$}  & $\textbf{63.9 }$ {\tiny $\pm 4.1$} & $\textbf{78.1 }$ {\tiny $\pm 2.8$} & $\textbf{46.5 }$ {\tiny $\pm 10.8$} & $\textbf{65.2 }$ {\tiny $\pm 9.4$}\\

\bottomrule
\end{tabular}
\end{center}
\end{table}

\subsection{Relational Reasoning from Natural Language}\label{sec:language}
Finally, we show that our approach is also applicable over inputs other than images through the text-based relational reasoning task CLUTRR~\cite{Sinha2019CLUTRRAD}. 
In this task, the reasoning task is inferring kinship relationships between characters in short stories that describe the family tree.  
We use the $k = 2,3,4$ datasets~\cite{Sinha2019CLUTRRAD} 
as the training set.
We plug in both SRN and the graph version (GRN)
into the RN baseline with LSTM encoders. 
To adapt to the various number of sentences across the dataset, we fix the set size to 20, with LSTM sentence embeddings padded with zeros as the initial guess $S_0$. The average cell states over all sentences are used as the data embedding $z$. We use the original model, except that we add 0.5 encoder dropout and 1e-4 $\ell_2$-regularization (5e-6 for GRN) to avoid overfitting.
Over 5 runs, for $k = 2,3,4$, the best average performance across epochs for the baseline is 49.7\%, while including the SRN reaches 55.0\%, and using the GRN reaches 56.1\%.

\section{Conclusion}
We have provided substantial evidence that
standard approaches for constructing sets for relational reasoning with neural networks
are fundamentally flawed due to the responsibility problem. 
Our Set Refinement Network module provides a simple solution to this problem with a basic underlying principle: instead of mapping an input to a set, find a set that maps to (an embedding of) the input.
This approach is remarkably effective and can easily be incorporated into a variety of learning pipelines. 
We focused on relational reasoning in this paper, as set structure is fundamental there, but we anticipate that our approach can be used in a variety of domains using set structure.

\section*{Acknowledgments}
This research was supported by NSF Award DMS-1830274, ARO Award W911NF19-1-0057,
and ARO MURI.

\section*{Broader Impact}
As neural networks continue to be deployed in a number of high-stakes tasks,
it is crucial to have a better understanding of their limitations and robustness.
In this paper, we make some progress on both of these issues. 
We showed that end-to-end learning systems that use an intermediate set-structured representation often have difficulty properly decomposing the input into set elements, illustrating that
existing pipelines are not actually creating the representations that they intended to make.
Also, our experiments in Section~\ref{sec:sort_of_clevr} demonstrated that a popular relational reasoning approach in fact learns a brittle model on a fairly simple dataset.
We have presented a first approach at alleviating some of these systemic problems,
and our experiments highlight how to learn more meaningful
set representations, which helps improve robustness in addition to predictive performance.

In addition, our method improves reasoning capabilities of a wide variety of neural networks, which is likely to increase the likelihood of such systems being used in the wild. Most modern ML systems are only relied upon to do ``System 1'' tasks, and we are contributing  to ML models being relied upon for ``System 2'' tasks as well. This has many potential ramifications, both positive and negative, largely related to the general problems of using ML models for real world tasks.

\bibliographystyle{abbrvnat}
\bibliography{example_paper}

\clearpage

\appendix

\section{Theorem of the responsibility problem in relational reasoning}\label{sec:circles_details}



\begin{theorem} Assume that there exists a set function $g: \mathcal{E} \to \mathcal{X} \subseteq \mathbb{R}^{k} $ that generates input from set of entities ($\mathcal{E} = \mathcal{S}_n^d$ is the set of all sets of size n with elements in $\mathbb{R}^d$, $d\geq 2, n \geq 2$), and $g$ is continuous with respect to the symmetric Chamfer and Euclidean distances. Let $h\colon \mathcal{X} \to \mathbb{R}^{n \times d}$. If for every $V = \{v_1, \ldots, v_n\} \in \mathcal{S}_{n}^{d}$, 
there exists a permutation $\sigma$ such that
$h(g(\{v_1, \ldots, v_n\})) = 
[v_{\sigma(1)}, v_{\sigma(2)},...,v_{\sigma(n)}]$,
then $h$ is discontinuous.
\end{theorem}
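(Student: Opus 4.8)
The plan is to reduce the statement to the known \emph{responsibility problem} for set-to-list maps, namely Theorem~1 of~\cite{Zhang2019FSPoolLS}, by applying it to the composite $f := h \circ g$. First I would observe that the hypothesis says \emph{exactly} that $f$ is an order-recovering set-to-list map: for every $V = \{v_1,\dots,v_n\} \in \mathcal{S}_n^d$ there is a permutation $\sigma$ with $f(V) = [v_{\sigma(1)},\dots,v_{\sigma(n)}]$, so $f$ outputs, for each input set, a listing of that very set's own elements. This is precisely the class of maps covered by the responsibility result, so I can conclude that $f$ is discontinuous. Then, since $g$ is assumed continuous (with respect to the symmetric Chamfer distance on $\mathcal{S}_n^d$ and the Euclidean distance on $\mathcal{X}$), a continuous $h$ would make $f = h \circ g$ continuous as a composition of continuous maps; this contradiction forces $h$ to be discontinuous. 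The entire argument is this one-line reduction together with the contrapositive applied to composition.

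The heart of the matter, and the main obstacle, is the topological fact underlying the cited result, which I would establish by a monodromy (swap-loop) argument. Pick two distinct points and rotate them by angle $\pi$ around their midpoint inside a $2$-dimensional affine plane (this uses $d \geq 2$), keeping the remaining $n-2$ points fixed (this uses $n \geq 2$). At the two endpoints the underlying \emph{set} is identical, but the two moving points have continuously exchanged positions. If $f$ were continuous along the resulting path $V(t)$ in $\mathcal{S}_n^d$, then $f(V(t))$ would be a continuous selection among the finitely many orderings of $V(t)$; because these orderings remain pairwise separated throughout (the two points never collide, so the candidate lists stay a bounded distance apart), the selection must stay on a single branch. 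Continuity then forces $f(V(1))$ to be the \emph{swapped} listing relative to $f(V(0))$, whereas $V(0) = V(1)$ as sets demands $f(V(0)) = f(V(1))$ — a contradiction.

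The size and dimension conditions $n \geq 2$ and $d \geq 2$ are exactly what make this swap loop available: with $n = 1$ there is nothing to permute, and with $d = 1$ two distinct points on a line cannot be moved past each other without colliding, so no nontrivial monodromy exists. Equivalently, for $d \geq 2$ the configuration space of $n$ distinct points in $\mathbb{R}^d$ is path-connected and the quotient by the $S_n$-action carries a nontrivial deck/monodromy action on orderings, which is precisely the obstruction proved in~\cite{Zhang2019FSPoolLS}. Once that fact is invoked for $h \circ g$, transferring discontinuity back to $h$ is immediate from the continuity of $g$, so I expect the write-up to be very short, with all the real content sitting in the cited responsibility theorem.
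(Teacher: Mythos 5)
Your proposal is correct and follows essentially the same route as the paper: the paper's proof is precisely the one-line reduction of applying the responsibility theorem of Zhang et al.\ to the composite $h \circ g$ and concluding by contradiction from the continuity of $g$. Your additional sketch of the swap-loop monodromy argument underlying the cited theorem is accurate but is content the paper simply delegates to the citation.
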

\begin{proof}
For sake of contradiction, suppose a continuous $h$ exists, then $h \circ g$ is an exact counter example for the responsibility problem theorem in \cite{Zhang2019FSPoolLS}. Thus, such a function cannot exist.
\end{proof}

Generalizations that we can make to help in application in practice:
\begin{enumerate}
    \item  $\mathcal{E}$ can be restricted to subsets of the whole space. One just needs to show the existence of any  smooth curve $\gamma \subseteq \mathcal{E}$ that connects two sets with different permutation. This curve can then be used to replace the circle in the original proof. In the Circles Dataset, we can let $\gamma $ be the rotation of two green circles (while red circles are unchanged), such that $f$ must contain discontinuous jump. Such curve should commonly exist since entities, like input images, generally share the same support (or even assumed to be independent and identically distributed).
    \item This theorem can also be applied by letting $\mathcal{E}$ and $\mathcal{X}$ be a subset of real data. For example, we can apply this theorem to the images with only green circles in the Circles Dataset, such that $f$ must contain discontinuous jump trivially. This reduces the assumptions on the existence of $g$ to only a subset of the dataset.
\end{enumerate}

\section{Circles Dataset Reconstruction Experiment}\label{sec:circles_details}

\subsection{Architecture Details}
\textbf{Set Generator} For the baseline, the set generator $G$ is a standard image encoder derived from Stacked Convolutional Auto-Encoders~\cite{masci2011stacked} and DCGAN ~\cite{Radford2015UnsupervisedRL} with additional processing at the end similar to \citet{Kipf2020ContrastiveLO} and \citet{Baradel2020COPHYCL}:
\begin{enumerate}
    \item  Conv2d layer takes 3 channels as input, 64 filters, 4 kernel size, stride 2, padding 1, no bias, relu activation.
    \item  Conv2d layer takes 64 channels as input, 128 filters, 4 kernel size, stride 2, padding 1, no bias,batch normalized, relu activation.
    \item  Conv2d layer takes 128 channels as input, 256 filters, 4 kernel size, stride 2, padding 1, no bias, batch normalized,relu activation.
    \item  Conv2d layer takes 256 channels as input, 512 filters, 4 kernel size, stride 4, padding 1, no bias,batch normalized, relu activation.
    \item  Reshape to ($\lvert S \rvert$, 2048 / $\lvert S \rvert$), where $\lvert S \rvert$ is the size of the set, and transpose dimension 1,2
    \item  Conv1d layer takes (2048 / $\lvert S \rvert$) channels as input, element dimension number of filters, 1 kernel size.
\end{enumerate}
The output tensor would then be of shape $(b, n, \lvert S \rvert)$, where $b$ is the batch size and $n$ is the dimension of each set element.
This output is interpreted as a set of vector elements for each instance, with the corresponding set size and element dimension pre-specified. This generator is designed to process images with the convolutional layers. The output of step 4 can be seen as a set of feature maps, where each feature map has equal perception field that covers the whole image. We then use step 5 and step 6 to transform this set to the desired shape by grouping feature maps and processing each feature map group individually with shared function. Overall, this makes sure that each set element is produced with the same initial input (the whole image) and with the same architecture, although the weights might be different.

For SRN, the same architecture is used for predicting the innital guess. After that, the embedding is generated by flattening the feature maps and projecting down to a 100-dimensional vector using a fully connected layer. $\dspnenc$ processes each element in the input set individually with a 3-layer MLP with 512 as the hidden dimensions, followed by FSPool \cite{Zhang2019FSPoolLS} with 20 pieces and no relaxation.

\textbf{Set Function} As described in the main paper, the prediction function $F$
decodes each element in the generated set to an image independently through shared transpose-convolution layers. We then aggregate the set of images with self attention. Both baseline and SRN have the same image decoder (TransposeConvs) architecture:
\begin{enumerate}
    \item Fully connected layer, projecting from  element dimension $n$ to feature maps
of shape (1024, 4, 4). Apply batch norm before reshape and use relu activation.
    \item  TransposeConv layer filters 512, kernel size 4, stride
2, padding 1, no bias, batch normalized, relu activation.
    \item  TransposeConv layer filters 256, kernel size 4, stride
2, padding 1, no bias, batch normalized, relu activation.
    \item  TransposeConv layer filters 3, kernel size 4, stride
4, padding 0, no bias.
\end{enumerate}

This architecture first creates objects and then superimposes them for the final reconstruction. This is permutation invariant as the order of the elements in the set does not change the output and allows each element to be interpreted from the individual reconstructed objects, leading to direct disentanglement.

\subsection{Disentanglement Metric Specification}
To measure whether an image was completely disentangled, we examine each of the individual images generated by each of the set elements. If any of the values were more than $.1$ (out of $1$), we considered that set element to be ``non-empty.'' 
If the total number of non-empty set elements matched the total number of circles, we considered the image to be ``completely disentangled.'' 
Although it is possible that this results in false positives (for example, if the model splits one circle into 2 set elements and puts 2 other circles in one set element), we rarely observe this in practice.

\subsection{Interpolation plots}
As described in the main paper, the baseline gradually hands responsibility for the circle from one set element to the other, as we move the circles from left to right. This is visualized in \ref{fig:interpolation}, where the colors represent the responsibilities as in the Figure 1 in the main paper.

\begin{figure}[htb!]
    \centering
    \includegraphics[width=\linewidth]{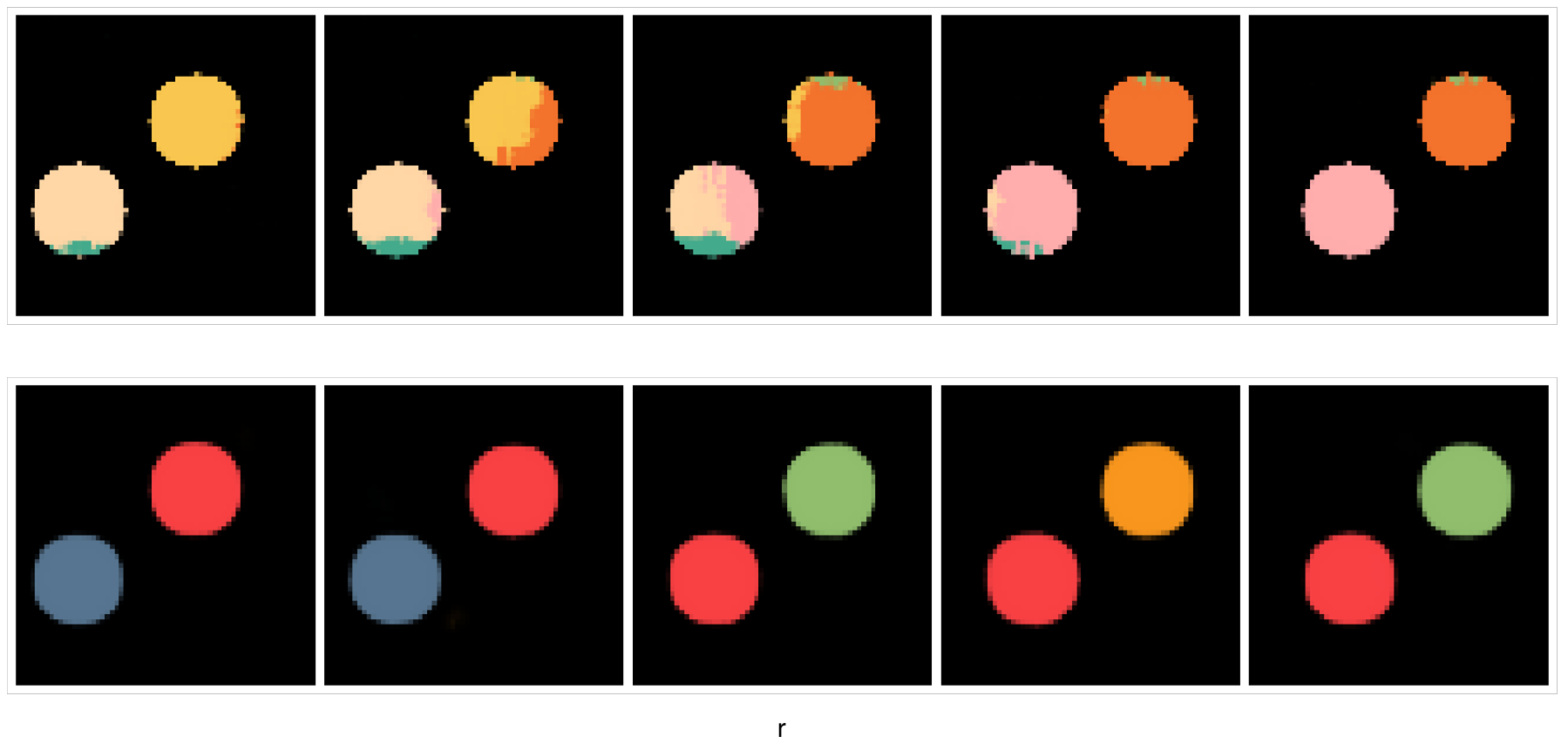}
    \caption{Reconstruction results while input is gradually shifting to the right.}
    \label{fig:interpolation}
\end{figure}

\subsection{3d t-SNE plots}

The three-dimensional t-SNE plots with variations of coordinates, radius and color are shown in Figure \ref{fig:3dtsne}.

\begin{figure*}[htb!]
\centering
\subfigure{\includegraphics[width=0.23\linewidth]{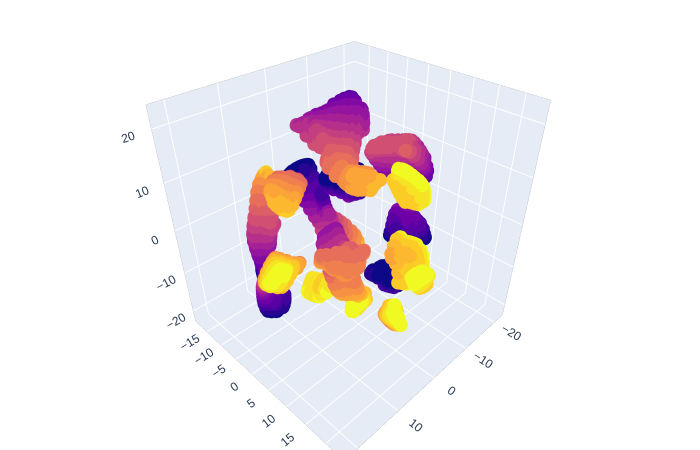}}
\subfigure{\includegraphics[width=0.23\linewidth]{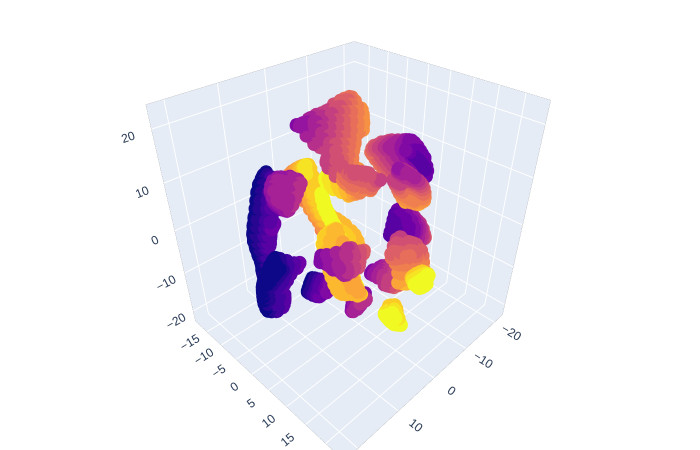}}
\subfigure{\includegraphics[width=0.23\linewidth]{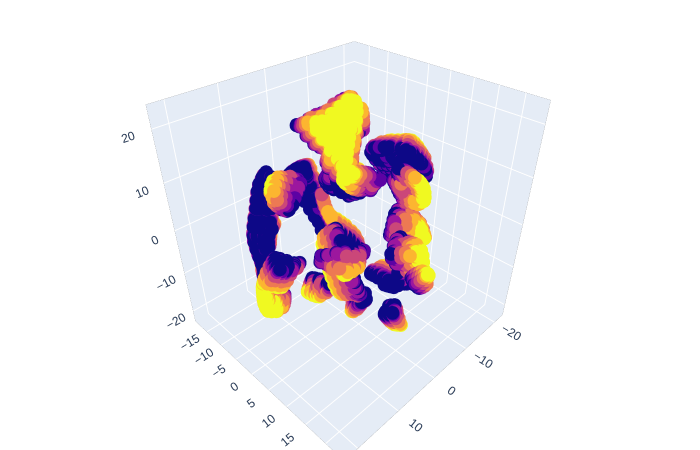}}
\subfigure{\includegraphics[width=0.23\linewidth]{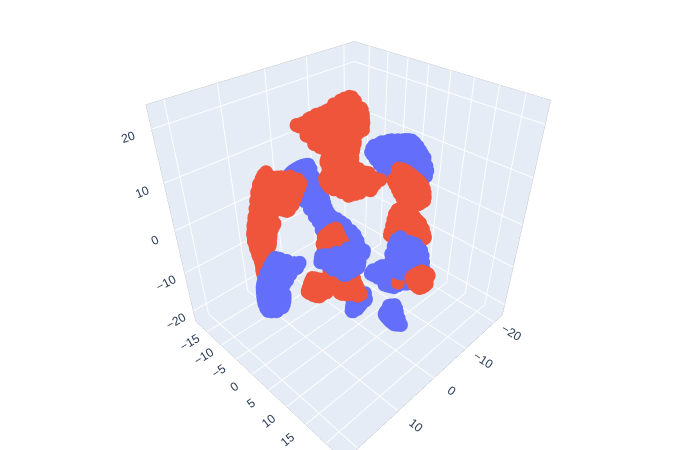}}

\subfigure[X coordinate]{\includegraphics[width=0.23\linewidth]{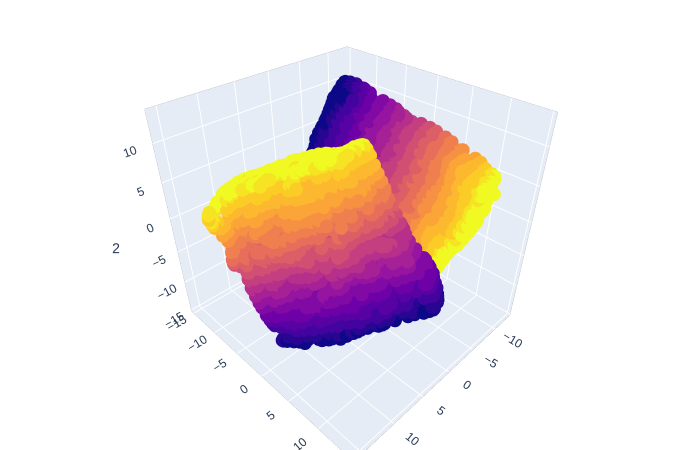}}
\subfigure[Y coordinate]{\includegraphics[width=0.23\linewidth]{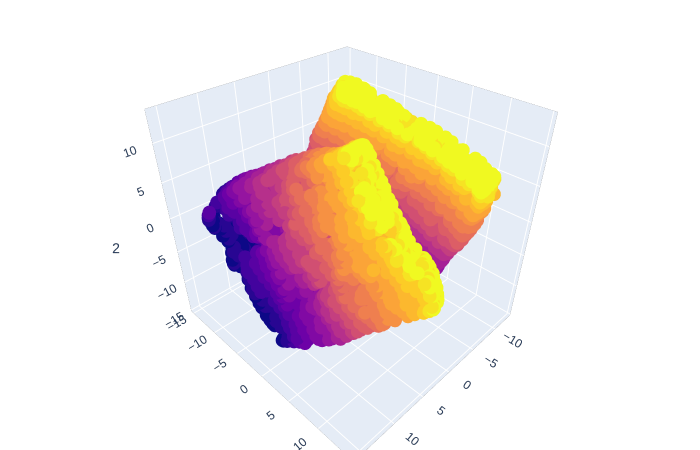}}
\subfigure[Radius]{\includegraphics[width=0.23\linewidth]{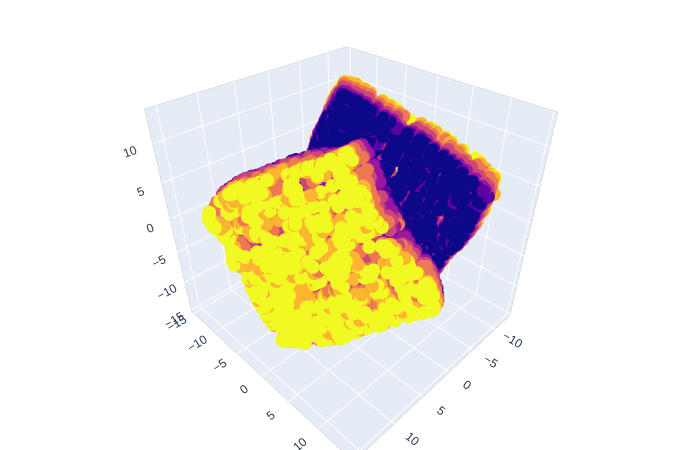}}
\subfigure[Color]{\includegraphics[width=0.23\linewidth]{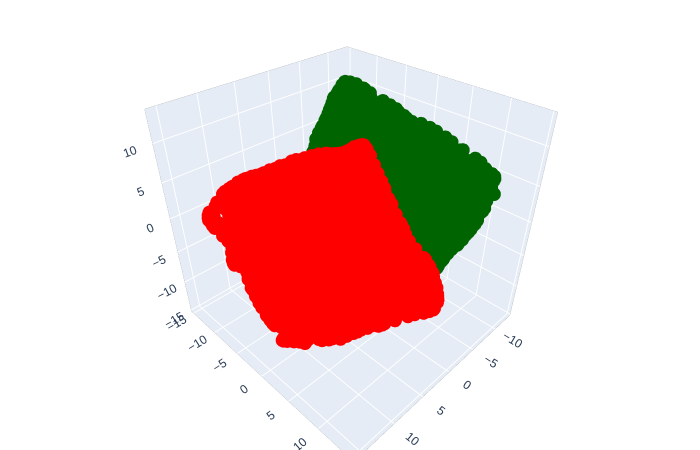}}
\caption{Three-dimensional t-SNE embeddings of set elements generated by baseline (top) and SRN (bottom) method, colored by various parameters of the circles: X coordinate (a), Y coordinate (b), radius (c), and color (d). The representations have clear continuous structure.}
\label{fig:3dtsne}
\end{figure*}

\subsection{Additional Results}

Figures~\ref{fig:circles1} shows 10 randomly sampled
images from the test set along with the latent sets learned by SRN.
As shown in the figure, the decomposition is almost perfect for SRN,
whereas baseline frequently cannot disentangle objects.

\begin{figure*}[htb!]
    \centering
    \includegraphics[width = 0.8\linewidth]{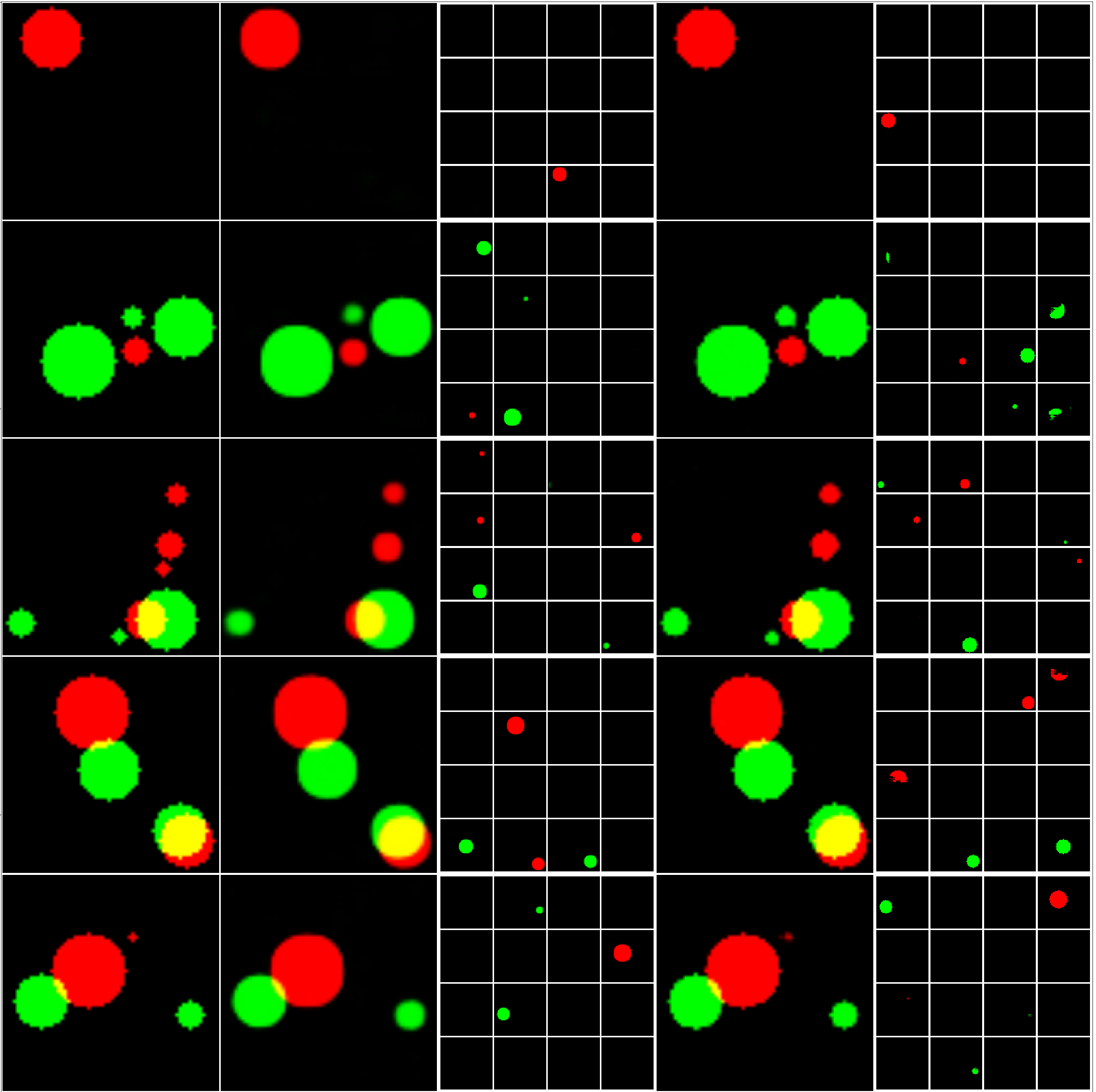}
    \includegraphics[width = 0.8\linewidth]{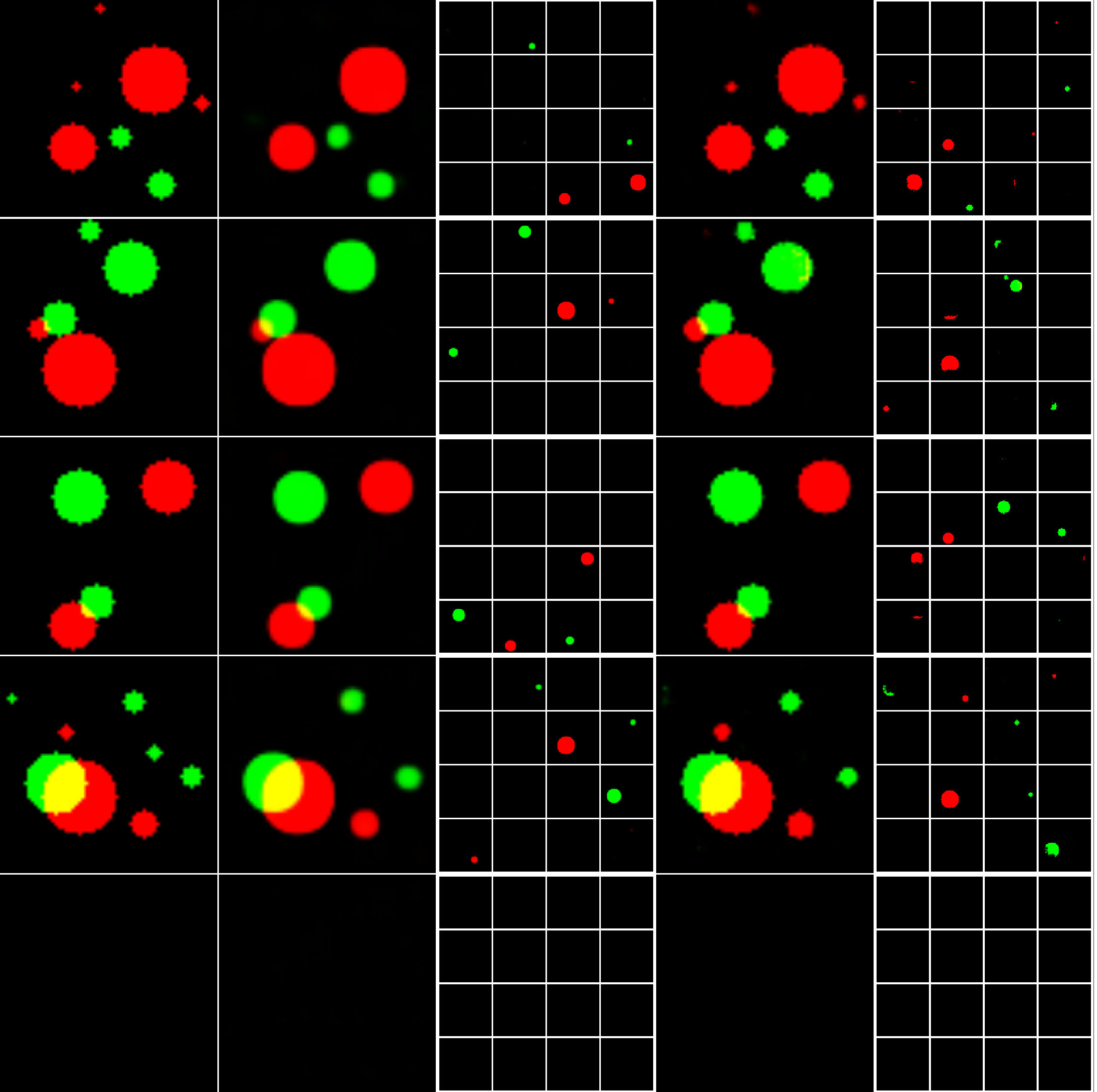}
    \caption{Ten sampled reconstruction and decomposition results. From left to right, column-wise: original images, SRN reconstruction, SRN decomposition, baseline reconstruction, baseline decomposition.}
    \label{fig:circles1}
\end{figure*}


\section{CLEVR Object Reconstruction Results Experiment}\label{app:clevr}

\paragraph{Dataset} We use the CLEVR dataset~\cite{Johnson2016CLEVRAD} to show that such object decomposition holds in more complicated settings. The dataset contains 70,000 training and 15,000 validation images. The original dataset does not contain images of the masked objects, but we generate these through publicly available code.

\paragraph{Model} We again encode images using a CNN that takes 128 $\times$ 128 images as input and has four convolutional layers to produce a 512-dimensional image embedding. The architecture for the CLEVR experiment are nearly the same as ones for the Circles Dataset described above. The only difference is that the projection layers used in image embedding generation and the decoder are modified to adapt to image size of $128 \times 128$.

\begin{table}[t]
\label{iso}
\caption{Intersection over Union (IoU) results for CLEVR image reconstruction.
Our SRN approach out-performs the MLP baseline.}
\label{clevr}
\begin{center}
\begin{tabular}{lcccr}
\toprule
Model Description & MLP & baseline   \\
\midrule
overall IoU & 0.9193 & \textbf{0.9345} \\
per-object IoU & 0.7737 & \textbf{0.8305} \\
\bottomrule
\end{tabular}
\end{center}
\vskip -0.1in
\end{table}

\paragraph{Results} 
Figures~\ref{fig:clevr1}~and~\ref{fig:clevr2} show
10 randomly sampled images from the test set, along with the SRN
and baseline latent sets. 
Figures~\ref{fig:clevr_single},~\ref{fig:clevr_srnset},~and~\ref{fig:clevr_mlpset}
each show one sample in more detail.
Again, SRN disentangled objects and completed occluded part of the objects reasonably, while the baseline failed to disentangle objects.

We also measured the decomposition quality using intersection over union (IoU) score (Table~\ref{clevr}). 
We use two metrics for evaluation: the standard overall intersection-over-union (IoU) and a per-object intersection-over-union. The per-object IoU is the IoU over the Chamfer matching between ground-truth bounding boxes and the bounding boxes of the predicted decomposition. In both cases, we first threshold the pixels valued in $[0,1]$ by $0.01$ (i.e., pixels with all channels smaller than $0.01$ are set to zero).
For the overall IoU case, this threshold is applied on the final reconstruction, and in the per-object case it is applied on each set element. 
Due to rendering noise, it is difficult to obtain instance segmentation masks.
Instead, we algorithmically generate bounding boxes 
with OpenCV's \texttt{findContours()} function applied to the thresholded prediction.
These bounding boxes are then compared to ground truth bounding boxes.
We match the latent set of a prediction (where each element may contain multiple bounding boxes if a prediction fails to disentangle objects) with the set of generated bounding boxes using Chamfer matching, where the cost is the IoU between the pairs of elements: each prediction is matched with the closest ground truth bounding box. 
With this computed assignment, we then compute the average per-object IoU.

The overall IoU is calculated pixel-wise over all objects in the foreground in the reconstructed image. 
The per-object IoU is the IoU over the Chamfer matching between ground-truth bounding boxes and the bounding boxes of the predicted decomposition.
In both cases, SRN performs better than the baseline. The low per-object IoU for the baseline is due to the poor object decomposition and its inability to handle occluded objects.
In contrast, SRN can decompose the scene with the superposition of full objects, including the occluded parts, providing a more meaningful disentangled representation.

\begin{figure*}[htb!]
    \centering
    \includegraphics[width = \linewidth]{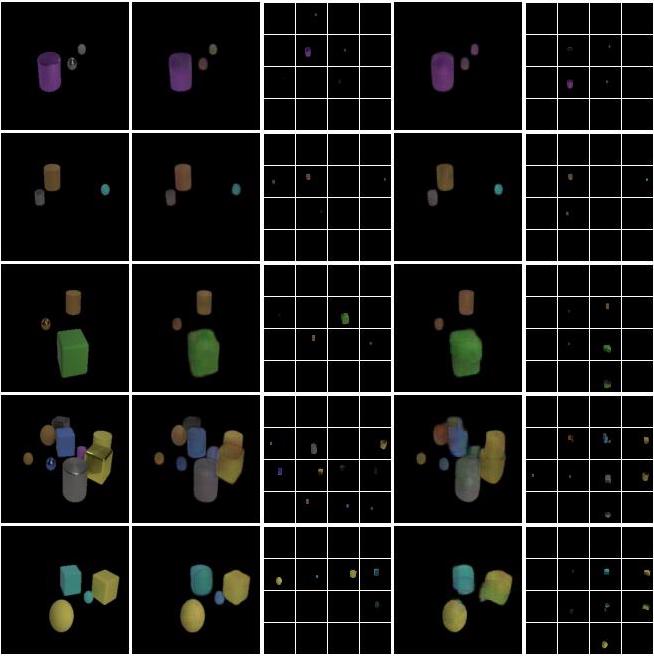}
    \caption{Five sampled reconstruction and decomposition results. From left to right, column-wise: ground truth objects image, SRN reconstruction, SRN decomposition, baseline reconstruction, baseline decomposition}
    \label{fig:clevr1}
\end{figure*}

\begin{figure*}
    \centering
    \includegraphics[width = \linewidth]{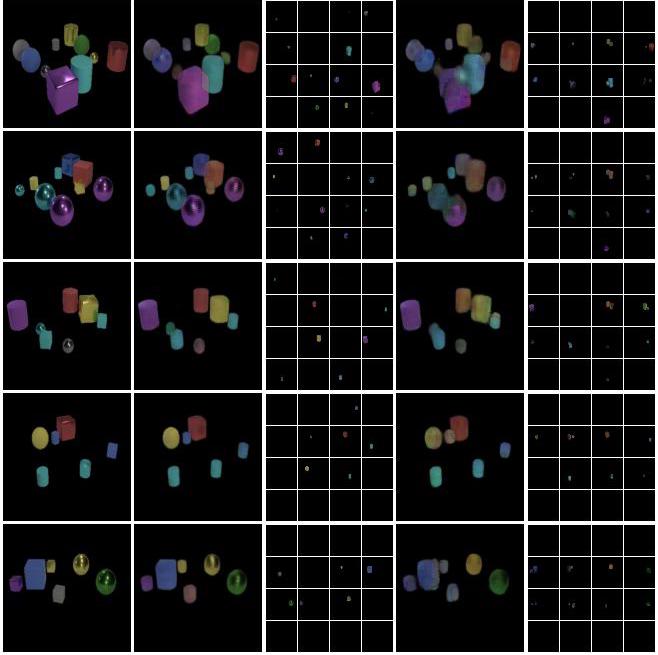}
    \caption{Five sampled reconstruction and decomposition results. From left to right, column-wise: ground truth objects image, SRN reconstruction, SRN decomposition, baseline reconstruction, baseline decomposition}
    \label{fig:clevr2}
\end{figure*}

\begin{figure*}[htb!]
\centering
\subfigure[Ground Truth Image]{\includegraphics[width=0.25\linewidth]{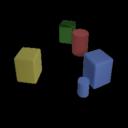}}
\hspace{20pt}
\subfigure[SRN Reconstruction]{\includegraphics[width=0.25\linewidth]{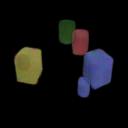}}
\hspace{20pt}
\subfigure[Baseline Reconstruction]{\includegraphics[width=0.25\linewidth]{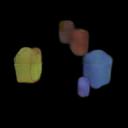}}
\caption{The example ground truth and reconstructions for Figure \ref{fig:clevr_srnset} and \ref{fig:clevr_mlpset}.}
\label{fig:clevr_single}
\end{figure*}

\begin{figure*}[htb!]
    \centering
    \includegraphics[width = 0.9\linewidth]{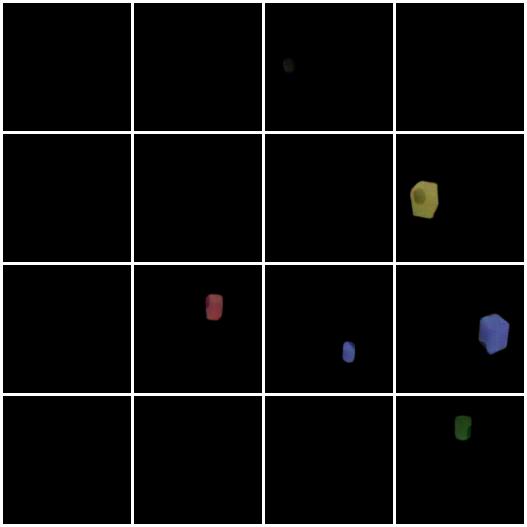}
    \caption{The decomposition of SRN for example in Figure \ref{fig:clevr_single}.}
    \label{fig:clevr_srnset}
\end{figure*}

\begin{figure*}[htb!]
    \centering
    \includegraphics[width = 0.9\linewidth]{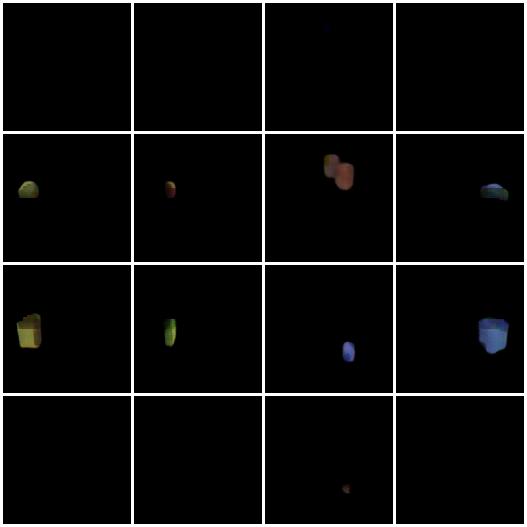}
    \caption{The decomposition of baseline for example in Figure \ref{fig:clevr_single}.}
    \label{fig:clevr_mlpset}
\end{figure*}

\section{Sort-of-CLEVR}
\subsection{Architecture Details}
We reuse the Relational Network and perceptual end implementations from the most popular open source Relational Network implementation \footnote{https://github.com/kimhc6028/relational-networks/tree/74cd9ac0703db01c6268a6515015b98bed3e4602}. The architecture used is 4 convolution layers with 24 channels each, 3x3 kernels, 2 stride, and 1 padding. Between each convolution layer is a relu and a batchnorm layer. 

This results in a (batch x filters x height x width) tensor (batch x 24 x 5 x 5) in our case. Each of the (5x5) cells in the feature map is treated as a size 24 feature map. These represent the ``entities'' which we perform relational reasoning on.

The question is encoded as a binary vector with 11 elements. The first 6 are an one hot encoding of which of the 6 colors the question is about. The next 2 are an one hot encoding of whether the question is relational or non-relational. The last 3 are an one-hot encoding of the 3 question subtypes. 

The question is concatenated to each of the entities, and is passed to the Relational Network. The $g_{\theta}$ (as in \cite{Santoro2017ASN}) is a 4 layer MLP with 256 elements per layer and ReLU non-linearities. A final Linear/ReLU layer is used at the end.

\subsection{Dataset Details}
We use the dataset generator from the same source as the model. \footnote{https://github.com/kimhc6028/relational-networks/tree/74cd9ac0703db01c6268a6515015b98bed3e4602}. However, instead of 10k images we use 100k. With a lower dataset size, both models tend to overfit and make it more difficult to demonstrate the responsibility problem vs general inaccuracy. Our limited experiments still demonstrate that SRN still improves both robustness and performance on the smaller dataset as well.

The configuration used in Robustness (Synthetic) is shown in Figure \ref{fig:robustness_synthetic}. The yellow shape is randomly sampled between a circle/rectangle to avoid degenerate solutions like predicting the opposite of the red shape. All questions are of the form ``What is the color of the furthest shape from the red shape?''.

The dataset used in Robustness (Dataset) is generated with the same code as the one used for training, with 2 exceptions. First, we only keep ``easy'' images - in other words, images where the shape furthest away from the red shape is at least twice as far as the second furthest shape. Second, we enforce all shapes to not be within 15 of the border. Otherwise, the rotations required for checking ``robustness'' would move the shapes outside of the borders.

\subsection{Hyperparameters Considered}
We find that this task tends to be fairly robust to hyperparameters. We tried both 5 steps and 10 steps for inner optimization steps, and did not find a significant difference. The inner learning rate is set to $0.1$ - changing the inner learning rate did not show a significant difference in performance either.

\subsection{Experiment with Cluttered Background}
As an example where traditional approaches might have more difficulty, we try a task with a cluttered background. We fill the background with 5 background rectangles with random color, height, and width. Note that without a task, it would seem like there are now 11 ``objects'' in this scene. For the baseline RN, this task poses significantly more difficulty, even on the non-relational task. After 20 epochs, it reaches 98.5\% accuracy on non-relational questions, and 94.0\% accuracy on relational questions. On the other hand, the SRN still reaches 99.9\% performance on non-relational questions, and 96.0\% accuracy on relational questions.

\begin{figure*}[t!]
    \centering
    \includegraphics[width = 0.3\linewidth]{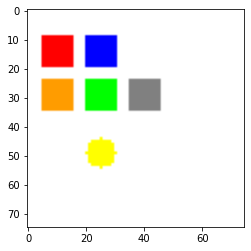}
    \includegraphics[width = 0.3\linewidth]{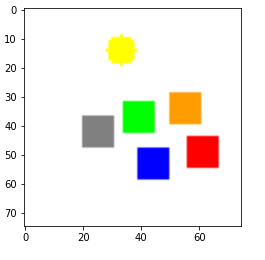}
    \caption{Examples of Robustness (Synthetic). The left one is the original configuration, and the right one is an example of how rotation looks like.}
    \label{fig:robustness_synthetic}
\end{figure*}

\section{World Model Learning}

\subsection{Architecture Details}
As described in the main paper, we used SRN module to refine the original state vectors $S_0$. Data embedding $g$ is generated by flattening the feature maps generated by object extractor, and passing it through a linear layer to 512 dimension (same as the number of hidden units in transition MLP). $\dspnenc$ is the same as in the circles reconstruction experiment. 

\subsection{Training and Hyper-parameters Details}
We use the official C-SWM code \footnote{https://github.com/tkipf/c-swm} and follow most of the training procedures in the paper, except for the evaluation set size, learning rate and number of epochs. We increase the evaluation set size from 100 to 1000 episodes to reduce variance. For learning rate, we test 5e-5, 1e-4, 5e-4, 1e-3, among which 1e-4 has substantially better performance. So we use 1e-4 learning rate for all runs and train until 500 epochs to reach convergence. The comparison of the baseline between original setup and new setup is included in table \ref{tab:baselines}. Note that there is a large variance of performance across epochs. In the main paper table, we report each model's best average performance on each metric, regardless whether it is the same epoch selected. 
In general, the epochs selected are all close to 500. 

For Pong, SRN has inner learning rate $0.045$ and $r = 10$. For Space Invader, SRN has inner learning rate $0.05$ and $r = 10$. We selected these values by starting with a random sampled value and hand-tuning for best eval performance. Specifically, we searched in the range 0.0025 to 1 for inner learning rate and $r = 5, 10$.

\begin{table}[t]
\caption{C-SWM hits at rank 1~\cite{Kipf2020ContrastiveLO} and mean reciprocal rank accuracy with learning rate 1e-4 and 5e-4 on the Atari Pong (Pong) and Space Invaders (Space) datasets (mean and standard deviation over 5 trials). The numbers in the parenthesis after model names are the number of object slots $K$ and epochs.}\label{tab:baselines}
\vspace{-\baselineskip}
\begin{center}
\begin{tabular}{ll ccccccc}
\toprule
&  & \multicolumn{2}{c}{1 Step} & \multicolumn{2}{c}{5 Steps}&\multicolumn{2}{c}{10 Steps} \\ 
\cmidrule(lr){3-4} \cmidrule(lr){5-6} \cmidrule(lr){7-8}
   & Configurations &  H@1  & MRR &  H@1  & MRR &  H@1  & MRR \\
\midrule
\multirow{8}{*}{\rotatebox{90}{ Pong }} 
&1e-4 (5, 500) & $\textbf{63.6}$ {\tiny $\pm 1.6$} & $\textbf{75.8}$ {\tiny $\pm 1.2$} & $\textbf{26.1}$ {\tiny $\pm 1.7$} & $\textbf{43.1}$ {\tiny $\pm 1.7$}  & $\textbf{16.1}$ {\tiny $\pm 1.4$}  & $28.9$ {\tiny $\pm 2.4$}\\
&1e-4 (5, 200) & $45.7$ {\tiny $\pm 4.8$} & $63.5$ {\tiny $\pm 3.7$}& $16.5$ {\tiny $\pm 3.8$}& $32.8$ {\tiny $\pm 4.1$} & $8.6$ {\tiny $\pm 1.9$} & $21.2$ {\tiny $\pm 2.9$} \\ 
&1e-4 (3, 500) & $52.5$ {\tiny $\pm 6.8$} & $68.6$ {\tiny $\pm 4.4$} & $24.0$ {\tiny $\pm 3.5$} & $40.7$ {\tiny $\pm 4.0$} & $11.8$ {\tiny $\pm 3.3$} & $25.1$ {\tiny $\pm 4.7$}\\
&1e-4 (3, 200) & $39.7$ {\tiny $\pm 6.6$}& $59.0$ {\tiny $\pm 5.5$}& $13.9$ {\tiny $\pm 2.6$} & $28.8$ {\tiny $\pm 3.3$}& $6.5$ {\tiny $\pm 1.5$}& $16.5$ {\tiny $\pm 2.0$}\\
&5e-4 (5, 500) & $56.8$ {\tiny $\pm 9.6$}& $71.6$ {\tiny $\pm 6.4$}& $21.2$ {\tiny $\pm 4.9$} & $37.6$ {\tiny $\pm 5.3$}& $13.3$ {\tiny $\pm 3.9$}& $26.5$ {\tiny $\pm 5.0$}\\
&5e-4 (5, 200) & $22.8$ {\tiny $\pm 12.5$}& $42.0$ {\tiny $\pm 13.4$} & $7.1$ {\tiny $\pm 4.6$} & $18.6$ {\tiny $\pm 6.9$}& $4.0$ {\tiny $\pm 2.9$} & $11.8$ {\tiny $\pm 5.8$} \\
&5e-4 (3, 500) & $29.7$ {\tiny $\pm 8.6$}& $49.4$ {\tiny $\pm 8.1$}& $20.4$ {\tiny $\pm 8.2$}& $37.0$ {\tiny $\pm 9.3$}& $14.4$ {\tiny $\pm 5.4$}& $\textbf{30.1}$ {\tiny $\pm 7.2$}\\
&5e-4 (3, 200) & $26.9$ {\tiny $\pm 2.4$}& $46.8$ {\tiny $\pm 2.7$}& $12.4$ {\tiny $\pm 3.5$}& $27.9$ {\tiny $\pm 4.9$}& $10.2$ {\tiny $\pm 4.0$}& $23.6$ {\tiny $\pm 5.9$}\\
\midrule
\multirow{8}{*}{\rotatebox{90}{ Space }}
&1e-4 (5, 500) & $46.9 $ {\tiny $\pm 6.9$} & $65.6 $ {\tiny $\pm 5.2$} & $33.1 $ {\tiny $\pm 12.2$} & $53.6 $ {\tiny $\pm 11.1$} & $21.6 $ {\tiny $\pm 5.2$} & $40.0 $ {\tiny $\pm 5.7$} \\ 
&1e-4 (5, 200) & $37.1$ {\tiny $\pm 8.0$} & $56.8$ {\tiny $\pm 7.3$} & $19.6$ {\tiny $\pm 9.3$} & $38.1$ {\tiny $\pm 12.2$} & $10.0$ {\tiny $\pm 3.5$} & $23.5$ {\tiny $\pm 4.9$}\\
&1e-4 (3, 500) & $\textbf{67.9} $ {\tiny $\pm 3.6$} & $\textbf{80.6} $ {\tiny $\pm 2.7$} & $\textbf{46.7} $ {\tiny $\pm 18.5$} & $\textbf{64.4} $ {\tiny $\pm 17.4$} & $\textbf{30.7} $ {\tiny $\pm 8.6$} & $\textbf{50.6} $ {\tiny $\pm 8.3$} \\
&1e-4 (3, 200) & $54.1$ {\tiny $\pm 7.6$} & $70.6$ {\tiny $\pm 5.7$} & $30.3$ {\tiny $\pm 13.1$} & $50.4$ {\tiny $\pm 14.9$} & $16.8$ {\tiny $\pm 3.1$} & $33.4$ {\tiny $\pm 4.0$}\\
&5e-4 (5, 500) & $56.8$ {\tiny $\pm 3.8$} & $73.4$ {\tiny $\pm 2.9$} & $34.6$ {\tiny $\pm 14.1$} & $53.7$ {\tiny $\pm 13.8$} & $27.9$ {\tiny $\pm 11.4$} & $47.3$ {\tiny $\pm 13.0$}\\
&5e-4 (5, 200) & $40.0$ {\tiny $\pm 9.0$} & $60.8$ {\tiny $\pm 7.4$} & $14.2$ {\tiny $\pm 5.8$} & $31.1$ {\tiny $\pm 7.5$} & $11.5$ {\tiny $\pm 4.0$} & $26.0$ {\tiny $\pm 6.2$}\\
&5e-4 (3, 500) & $56.9$ {\tiny $\pm 3.3$} & $73.3$ {\tiny $\pm 2.0$} & $31.7$ {\tiny $\pm 11.2$} & $51.3$ {\tiny $10.5$} & $23.9$ {\tiny $\pm 14.0$} & $42.3$ {\tiny $\pm 13.8$}\\
&5e-4 (3, 200) & $55.8$ {\tiny $\pm 4.7$} & $72.5$ {\tiny $\pm 3.3$} & $32.8$ {\tiny $\pm 7.8$} & $54.2$ {\tiny $\pm 8.2$} & $15.5$ {\tiny $\pm 11.6$} & $33.5$ {\tiny $\pm 13.9$}\\
\bottomrule
\end{tabular}
\end{center}
\end{table}

\section{Language Reasoning}
We used the official CLUTRR-Baselines \footnote{https://github.com/koustuvsinha/clutrr-baselines} code for all our experiments. For SRN, $\dspnenc$ is the same as in the circles experiment. The inner learning rate is set to $0.001$ and $r = 5$. For GRN, we use scene graph encoder based on \citet{Johnson2018ImageGF} as $\dspnenc$. The inner learning rate is set to $0.0005$ and $r = 10$. The hyper-parameters are selected by starting with a random sampled value and hand-tuning for best eval performance. Specifically, we searched in the range 1e-3 to 1e-5 for l2-penalty, by which the result is affected the most.

\section{Computing Infrastructure and Runtime}
We use PyTorch \cite{Paszke2019PyTorchAI} for all of our experiments. All experiments were run on a SLURM cluster - primarily on a machine with a single RTX 2080TI Nvidia GPU. All runtime results are reported with those.

The effect that SRN has on runtime is dependent on the size of the perceptual end as well as the relational reasoning end. However, the effect that SRN has on runtime is disproportionate to the actual complexity of the models used in SRN, as the inner optimization loop means that SRN must execute the models within it multiple times. 

In the CLUTTR and circle reconstruction experiment, for example, the runtime is dominated by the perceptual end as well as the image reconstruction. As such, SRN increases the time per epoch by ~10\%.

On the other hand, for Sort-of-CLEVR and C-SWM, our experiments used a SRN module that increases runtime by about 2-3x. We note that although this is a significant runtime increase, it is somewhat ameliorated by 2 factors: 1. SRN tends to increases convergence rate. For example, on Sort-of-CLEVR SRN reaches 99\% performance on non-relational questions at epoch 3 vs epoch 10 (final performance for both is 100\%). 2. We have not focused on reducing the runtime of SRN. As we have reused the same FSPool set encoder with the same parameters for most all of our tasks, preliminary experiments suggest that we can improve runtime significantly by reducing the size of this encoder, possibly without affecting performance. Similarly, we can reduce runtime significantly by decreasing the number of inner optimization steps taken.

\section{Datasets and Code Release}

All datasets we used in this paper will be released together with our code, which will also contain the code and reference for generating the datasets.

\end{document}